\icmltitlerunning{An Efficient and Margin-Approaching Zero-Confidence Adversarial Attack}
\newcommand{\algname}{\textsc{MarginAttack }}
\newcommand{\algnamens}{\textsc{MarginAttack}}
\DeclareMathOperator*{\argmin}{arg\,min}
\newtheorem{theorem}{Theorem}
\newtheorem{lemma}{Lemma}[theorem]
\begin{document}

\twocolumn[
\icmltitle{An Efficient and Margin-Approaching Zero-Confidence Adversarial Attack}




\begin{icmlauthorlist}
\icmlauthor{Yang Zhang}{mitibm}
\icmlauthor{Shiyu Chang}{mitibm}
\icmlauthor{Mo Yu}{ibm}
\icmlauthor{Kaizhi Qian}{uiuc}
\end{icmlauthorlist}

\icmlaffiliation{mitibm}{MIT-IBM Watson AI Lab, Cambridge, MA, USA}
\icmlaffiliation{ibm}{IBM Research, USA}
\icmlaffiliation{uiuc}{University of Illinois, Urbana-Champaign, Illinois, USA}

\icmlcorrespondingauthor{Yang Zhang}{yang.zhang2@ibm.com}

\icmlkeywords{Machine Learning, ICML}

\vskip 0.3in
]



\printAffiliationsAndNotice{\icmlEqualContribution} 

\begin{abstract}
There are two major paradigms of white-box adversarial attacks that attempt to impose input perturbations.  The first paradigm, called the fix-perturbation attack, crafts adversarial samples within a given perturbation level.  The second paradigm, called the zero-confidence attack, finds the smallest perturbation needed to cause misclassification, also known as the margin of an input feature.  While the former paradigm is well-resolved, the latter is not.  Existing zero-confidence attacks either introduce significant approximation errors, or are too time-consuming.  We therefore propose \algnamens, a zero-confidence attack framework that is able to compute the margin with improved accuracy and efficiency.  Our experiments show that \algname is able to compute a smaller margin than the state-of-the-art zero-confidence attacks, and matches the state-of-the-art fix-perturbation attacks.  In addition, it runs significantly faster than the Carlini-Wagner attack, currently the most accurate zero-confidence attack algorithm.
\end{abstract}

\section{Introduction}

Adversarial attack refers to the task of finding small and imperceptible input transformations that cause a neural network classifier to misclassify.  White-box attacks are a subset of attacks that have access to gradient information of the target network. In this paper, we will focus on the white-box attacks. An important class of input transformations is adding small perturbations to the input.  There are two major paradigms of adversarial attacks that attempt to impose input perturbations.  The first paradigm, called the fix-perturbation attack, tries to find perturbations that are most likely to cause misclassification, with the constraint that the norm of the perturbations cannot exceed a given level.  Since the perturbation level is fixed, fix-perturbation attacks may fail to find any adversarial samples for inputs that are far away from the decision boundary.  The second paradigm, called the zero-confidence attack, tries to find the smallest perturbations that are guaranteed to cause misclassification, regardless of how large the perturbations are.  Since they aim to minimize the perturbation norm, zero-confidence attacks usually find adversarial samples that ride right on the decision boundaries, and hence the name ``zero-confidence''.  The resulting perturbation norm is also known as the \textit{margin} of an input feature to the decision boundary.  Both of these paradigms are essentially constrained optimization problems.  The former has a simple convex constraint (perturbation norm), but a non-convex target (classification loss or logit differences).  In contrast, the latter has a non-convex constraint (classification loss or logit differences), but a simple convex target (perturbation norm). Fig.~\ref{fig:attack_compare} illustrates the two attack paradigms.

\begin{figure*}[!tb]
\centering
    \begin{subfigure}{0.49\textwidth}
            \centering
            \includegraphics[width=\textwidth]{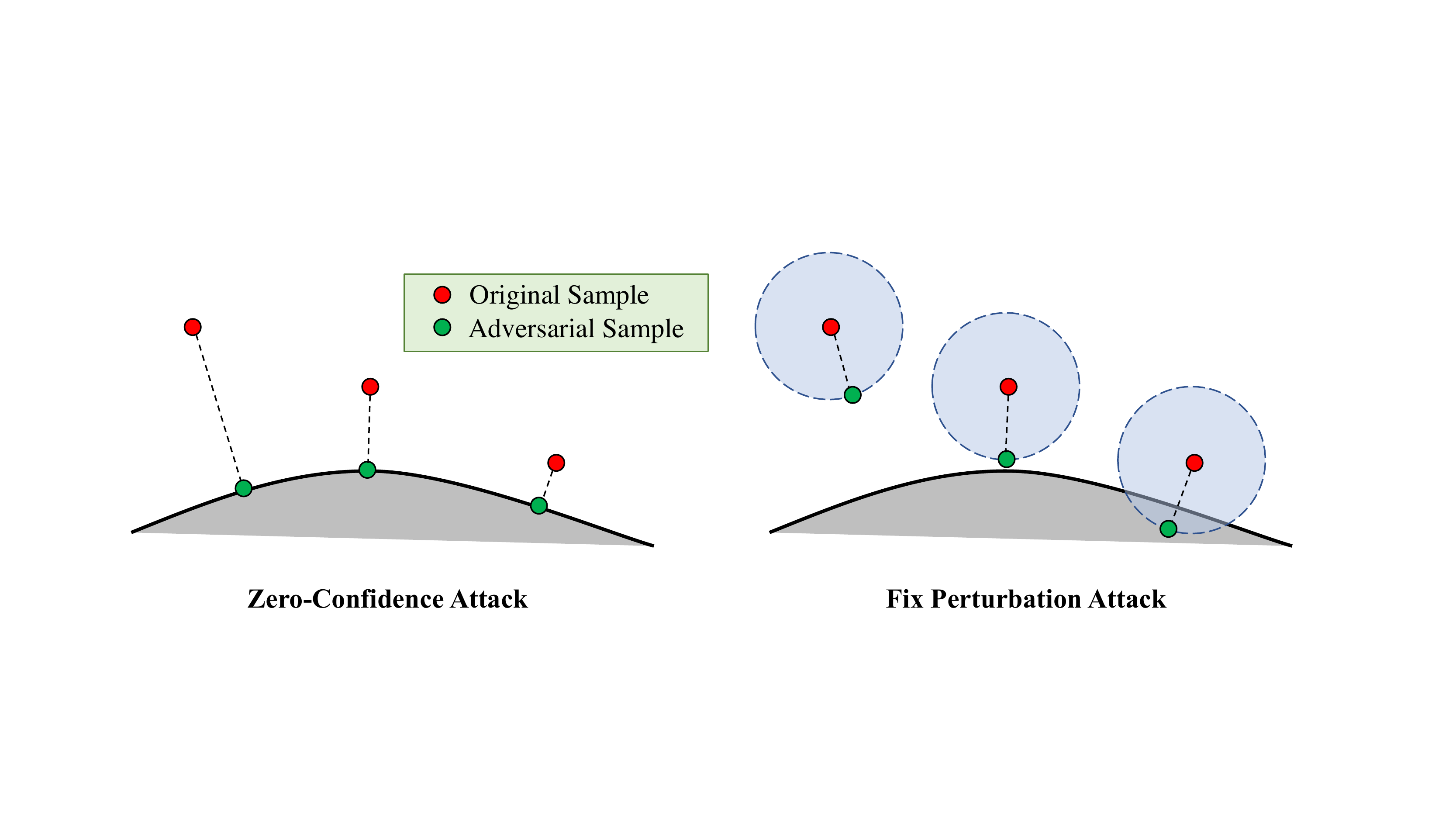}
            \caption{Zero-Confidence Attack}
    \label{fig:fig1}
    \end{subfigure}
\begin{subfigure}{0.46\textwidth}
            \centering
            \includegraphics[width=\textwidth]{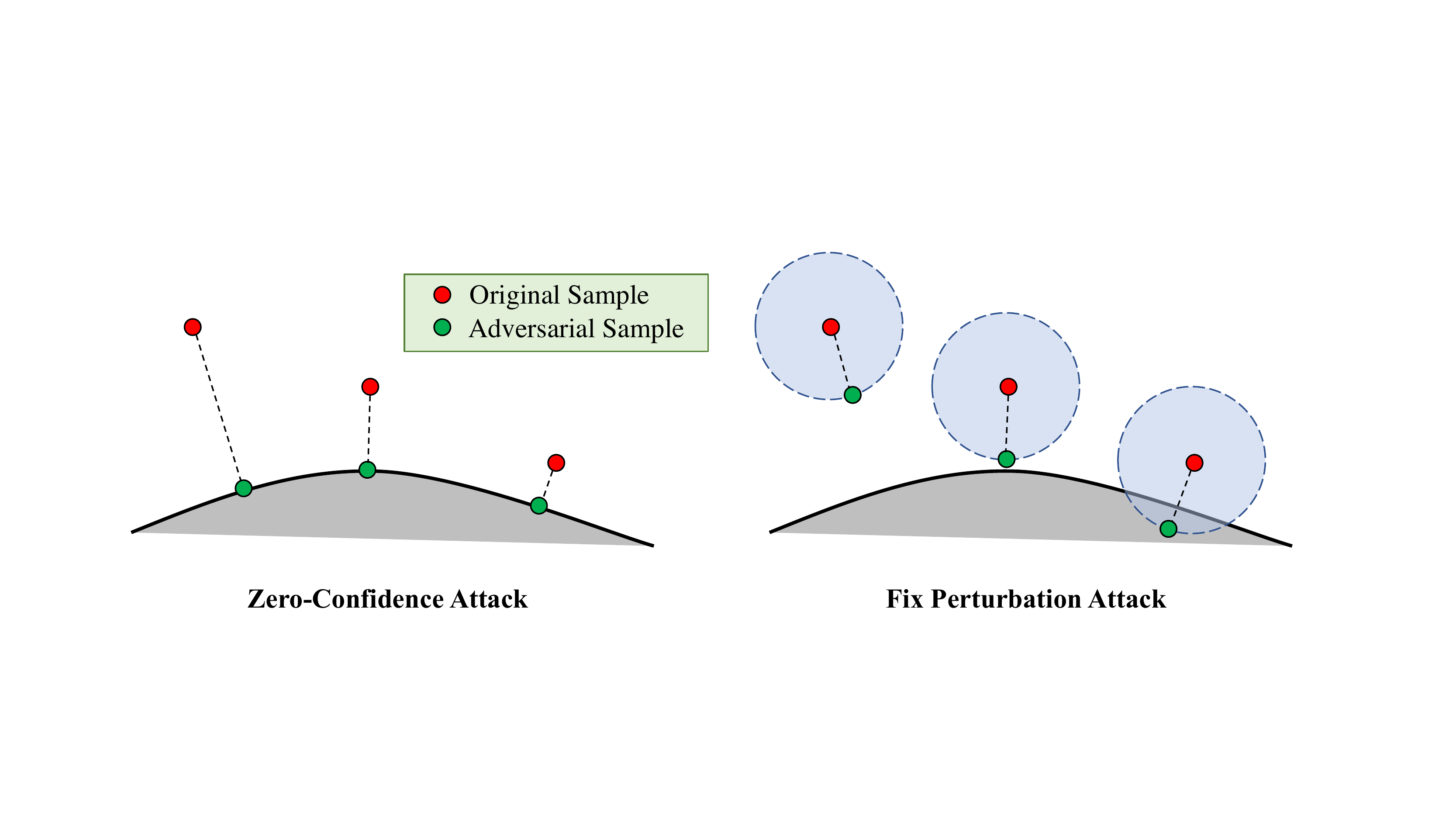}
            \caption{Fix-Perturbation Attack}
    \label{fig:fig2}
    \end{subfigure}
    \caption{Illutration of zero-confidence attack and fix-perturbation attack. The black curve denotes the decision boundary. The red dot denote the original features, and the green dots denote the adversarial examples. Zero-confidence attacks find the closest point on the decision boundary. Fix-perturbation attacks find the example that are most likely to cause a misclassification within a given perturbation level.}
    \label{fig:attack_compare}
\end{figure*}

Despite their similarity as optimization problems, the two paradigms differ significantly in terms of difficulty.  The fix-perturbation attack problem is easier. The state-of-the-art algorithms, including projected gradient descent (PGD) \citep{madry2017towards} and distributional adversarial attack \citep{zheng2018distributionally}, can achieve both high efficiency and high success rate, and often come with theoretical convergence guarantee.  On the other hand, the zero-confidence attack problem is much more challenging.  Existing methods are either not strong enough or too slow. For example, DeepFool \citep{moosavi2016deepfool} and fast gradient sign method (FGSM) \citep{goodfellow2014explaining, kurakin2016adversarial, kurakin2016adversarial2} linearizes the constraint, and solves the simplified optimization problem with a simple convex target and a linear constraint.  However, due to the linearization approximation errors, the solution can be far from optimal.  As another extreme, L-BFGS \citep{szegedy2013intriguing} and Carlini-Wagner (CW) \citep{carlini2017towards} convert the optimization problem into a Lagrangian, and the Lagrangian multiplier is determined through grid search or binary search.  These attacks are generally much stronger and theoretically grounded, but can be very slow.

The necessity of developing a better zero-confidence attack is evident.  The zero-confidence attack paradigm is a more realistic attack setting.  More importantly, it aims to measure the margin of each individual token, which lends more insight into the data distribution and adversarial robustness.  Motivated by this, we propose \algnamens, a zero-confidence attack framework that is able to compute the margin with improved accuracy and efficiency.  Specifically, \algname iterates between two moves.  The first move, called restoration move, linearizes the constraint and solves the simplified optimization problem, just like DeepFool and FGSM; the second move, called projection move, explores even smaller perturbations without changing the constraint values significantly.  By construction, \algname inherits the efficiency in DeepFool and FGSM, and improves over them in terms of accuracy with a convergence guarantee.  Our experiments show that \algname attack is able to compute a smaller margin than the state-of-the-art zero-confidence attacks, and matches the state-of-the-art fix-perturbation attacks. In addition, it runs significantly faster than CW, and in some cases comparable to DeepFool and FGSM.

The remainder of this paper is organized as follows. Section~\ref{sec:related} goes over some related works. Section~\ref{sec:alg} details the \algname algorithm and presents its convergence guarantee. Section~\ref{sec:exper} evaluates the accuracy and efficiency of \algname and compare it against the existing state-of-the-art adversarial attack algorithms. Section~\ref{sec:conclu} concludes the paper.

\section{Related Works}
\label{sec:related}

In addition to the aforementioned state-of-the-art attacks, there are a couple of other works that attempt to explore the margin.  Jacobian-based saliency map attack \citep{papernot2016limitations} applies gradient information to guide the crafting of adversarial examples.  It chooses to perturb the input features whose gradient is consistent with the adversarial goal.  One-pixel attack \citep{su2017one} finds adversarial examples by perturbing only one pixel, which can be regarded as finding the $\ell_0$ margin of the inputs.  \citet{ilyas2018black} converts PGD into a zero-confidence attack by searching different perturbation levels, but this again can be time-consuming because it needs to solve multiple optimization subproblems.  Weng \emph{et al.} proposed a metric called CLEVER \citep{weng2018evaluating}, which estimates an upper-bound of the margins.   Unfortunately, recent work \citep{goodfellow2018gradient} has shown that CLEVER can overestimate the margins due to gradient masking \citep{papernot2017practical}.  Boundary attack \cite{brendel2017decision} is a black-box attack that crawl along the decision boundary to find the optimal solution, which shares a similar idea with \algnamens.  The major difference is that boundary attack use a Monte-Carlo approach to explore the search direction, but \algname follows a calculated search direction, which is more efficient.  The above are a just a small subset of attack algorithms that are relevant to our work.  For an overview of the field, we refer readers to \citet{akhtar2018threat}.

The \algname framework is inspired by the Rosen's algorithm \citep{rosen1961gradient} for constraint optimization problems.  However, there are several important distinctions.  First, the Rosen's algorithm rests on some unrealistic assumptions for neural networks, \textit{e.g.} continuously differentiable constraints, while \algname has a convergence guarantee with a more realistic set of assumptions.  Second, the Rosen's algorithm requires a step size search for each iteration, which can be time-consuming, whereas \algname will work with a simple diminishing step size scheme.  Most importantly, as will be shown later, \algname refers to a large class of attack algorithms depending on how the two parameters, $a^{(k)}$ and $b^{(k)}$, are set, and the Rosen's algorithm only fits into one of the settings, which only works well under the $\ell_2$ norm.  For other norms, there exist other parameter settings that are much more effective.  As another highlight, the convergence guarantee of \algname holds for all the settings that satisfy some moderate assumptions.

\section{The \algname Algorithm}
\label{sec:alg}

In this section, we will formally introduce the algorithm and discuss its convergence properties.  In the paper, we will denote scalars with non-bolded letters, \emph{e.g.} $a$ or $A$; column vectors with lower-cased, bolded letters, \emph{e.g.} $\bm a$; matrix with upper-cased, bolded letters, \emph{e.g.} $\bm A$; sets with upper-cased double-stoke letters, \emph{e.g.} $\mathbb{A}$; gradient of a function $f(\bm x)$ evaluated at $\bm x = \bm x_0$ as $\nabla f(\bm x_0)$.

\subsection{Problem Formulation}
Given a classifier whose output logits are denoted as $l_0(\bm x), l_1(\bm x), \cdots, l_{C-1}(\bm x)$, where $C$ is the total number of classes, for any data token $(\bm x_0, t)$, where $\bm x_0$ is an $n$-dimensional input feature vector, and $t \in \{0, \cdots, C-1\}$ is its label, \algname computes
\begin{equation}
\bm x^* = \argmin_{\bm x}~ d(\bm x - \bm x_0),  ~~ \text{s.t. } ~c(\bm x) \leq 0, 
\label{eq:attack_problem}
\end{equation}
where $d(\cdot)$ is a norm.  In this paper we only consider $\ell_2$ and $\ell_\infty$ norms, but the proposed method is generalizable to other norms.
For non-targeted adversarial attacks, the constraint is defined as
\begin{equation}
c(\bm x) = l_t(\bm x) - \max_{i \neq t} l_i(\bm x) - \varepsilon,
\label{eq:constraint}
\end{equation}
where $\varepsilon$ is the offset parameter.  As a common practice, $\varepsilon$ is often set to a small negative number to ensure that the adversarial sample lies on the incorrect side of the decision boundary.  In this paper, we will only consider non-targeted attack, but all the discussions are applicable to targeted attacks (i.e. $c(\bm x) = \max_{i\neq a} l_i(\bm x) - l_{a}(\bm x) - \varepsilon$ for a target class $a$).

\subsection{The \algname Procedure}

\algname alternately performs the restoration move and the projection move.  Specifically, denote the solution after the $k$-th iteration as $\bm x^{(k)}$.  Then the two steps are:

\textbf{Restoration Move}: The restoration move tries to hop to the constraint boundary, \emph{i.e.} $c(\bm x) = 0$ with the shortest hop.  Formally, it solves:
\begin{equation}
\begin{aligned}
\bm z^{(k)} &= \argmin_{\bm x}~d(\bm x - \bm x^{(k)}),\\
\quad\text{s.t.}&~\nabla^T c(\bm x^{(k)}) (\bm x - \bm x^{(k)}) = -\alpha^{(k)} c(\bm x^{(k)}).
\end{aligned}
\label{eq:restore}
\end{equation}
where $\alpha^{(k)}$ is the step size within $[0, 1]$.  Notice that the left hand side of the constraint in Eq.~\eqref{eq:restore} is the first-order Taylor approximation of $c(\bm z^{(k)}) - c(\bm x^{(k)})$, so this constraint tries to move point closer to $c(\bm x) = 0$ by $\alpha^{(k)}$.  It can be shown, from the dual-norm theory,\footnote{See Thm.~\ref{thm:min_norm} in the appendix for a detailed proof.} that the solution to \eqref{eq:restore} is
\begin{equation}
    \bm z^{(k)} = \bm x^{(k)} - \frac{\alpha^{(k)} c(\bm x^{(k)}) \bm s(\bm x^{(k)})}{\nabla^T c(\bm x^{(k)}) \bm s(\bm x^{(k)})}.
    \label{eq:resotre_solution}
\end{equation}
$\bm s(\bm x)$ is defined such that $\nabla^T c(\bm x) \bm s(\bm x) = d^*(\nabla^T c(\bm x))$, where $d^*(\cdot)$ is the dual norm of $d(\cdot)$.  Specifically, noticing that the dual norm of the $\ell_p$ norm is the $\ell_{(1 - p^{-1})^{-1}}$ norm, we have
\begin{equation}
s(\bm x) = \left\{
\begin{array}{ll}
    \nabla c(\bm x) / \lVert \nabla c(\bm x) \rVert_2 &  \mbox{ if } d(\cdot) \mbox{ is the } \ell_2 \mbox{ norm}\\
    \textrm{sign}(\nabla c(\bm x)) & \mbox{ if } d(\cdot) \mbox{ is the } \ell_\infty \mbox{ norm.}
\end{array}
\right.
\label{eq:s}
\end{equation}
As mentioned, Eq.~\eqref{eq:resotre_solution} is similar to DeepFool under $\ell_2$ norm, and to FGSM under $\ell_\infty$ norm.  Therefore, we can expect that the restoration move should effectively hop towards the decision boundary, but the hop direction may not be optimal.  That is why we need the next move.

\textbf{Projection Move}: The projection move tries to move closer to $\bm x_0$ while ensuring that $c(\bm x)$ will not change drastically.  Formally, 
\begin{equation}
    \bm x^{(k+1)} = \bm z^{(k)} - \beta^{(k)} a^{(k)} \nabla d(\bm z^{(k)} - \bm x_0) - \beta^{(k)} b^{(k)} \bm s(\bm z^{(k)})
    \label{eq:project}
\end{equation}
where $\beta^{(k)}$ is the step size within $[0, 1]$; $a^{(k)}$ and $b^{(k)}$ are two scalars, which will be specified later.  As an intuitive explanation on Eq.~\eqref{eq:project}, notice that the second term, which we will call the \textit{distance reduction term}, reduces the distance to $\bm x_0$, whereas the third term, which we will call the \textit{constraint reduction term}, reduces the the constraint (because $\bm s(\bm z^{(k)})$ and $\nabla c(\bm z^{(k)})$ has a positive inner product).  Therefore, the projection move essentially strikes a balance between reduction in distance and reduction in constraint.

$a^{(k)}$ and $b^{(k)}$ can have two designs.  The first design is to ensure the constraint values are roughly the same after the move, \textit{i.e.} $c(\bm z^{(k)}) - c(\bm x^{(k+1)}) \approx 0$.  By Taylor approximation, we have
\begin{equation}
    \nabla^T c(\bm z^{(k)}) (\bm x^{(k+1)} - \bm z^{(k)} ) = 0,
    \label{eq:project_c1}
\end{equation}
whose solution is
\begin{equation}
\label{eq:project_s1}
    b^{(k)} = \frac{a^{(k)}\nabla^T c(\bm z^{(k)})\nabla d(\bm z^{(k)} - \bm x_0)}{\nabla^T c(\bm z^{(k)}) \bm s(\bm z^{(k)})}.
\end{equation}

Another design is to ensure the perturbation norm reduces roughly by $\beta^{(k)}$,  \textit{i.e.} 
\begin{equation*}
    d(\bm x^{(k+1)} - \bm x_0) \approx (1-\beta^{(k)})d(\bm z^{(k)} - \bm x_0).
\end{equation*}

By Taylor approximation, we have
\begin{equation}
    \nabla^T d(\bm z^{(k)} - \bm x_0) (\bm x^{(k+1)} - \bm z^{(k)} ) = \beta^{(k)} d(\bm z^{(k)} - \bm x_0),
    \label{eq:project_c2}
\end{equation}
whose solution is
\begin{equation}
    a^{(k)} = 1 - \frac{b^{(k)}\nabla^T d(\bm z^{(k)} - \bm x_0)\bm s(\bm z^{(k)})}{\nabla^T d(\bm z^{(k)} - \bm x_0)\nabla d(\bm z^{(k)} - \bm x_0)}.
    \label{eq:project_s2}
\end{equation}
It should be noted that Eqs.~\eqref{eq:project_s1} and~\eqref{eq:project_s2} are just two specific choices for $a^{(k)}$ and $b^{(k)}$.  It turns out that \algname will work with a convergence guarantee for a wide range of bounded $a^{(k)}$s and $b^{(k)}$s that satisfy some conditions, as will be shown in section~\ref{subsec:guarantee}.  Therefore, \algname provides a general and flexible framework for zero-confidence adversarial attack designs.  In practice, we find that Eq.~\eqref{eq:project_s1} works better for $\ell_2$ norm, and Eq.~\eqref{eq:project_s2} works better for $\ell_\infty$.

\subsection{How \algname Works}
\begin{figure}[!t]
\centering
\includegraphics[width=0.9\columnwidth]{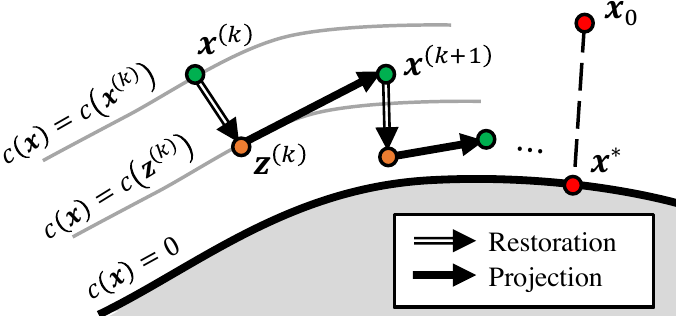}
\captionof{figure}{A convergence path of \algname using $\ell_2$ norm as an example. The thick black curve depicts the decision boundary. The grey curves denote the contours of the constraint function. The red dots on the right denote the original input feature and the optimal zero-confident adversarial example. Double arrows denote the direction of the restoration moves. Single arrows denote the direction of projection moves.}
\label{fig:converge_path}
\end{figure}

\begin{figure*}[!tb]
\centering
    \begin{subfigure}{0.49\textwidth}
            \centering
            \includegraphics[width=0.75\textwidth]{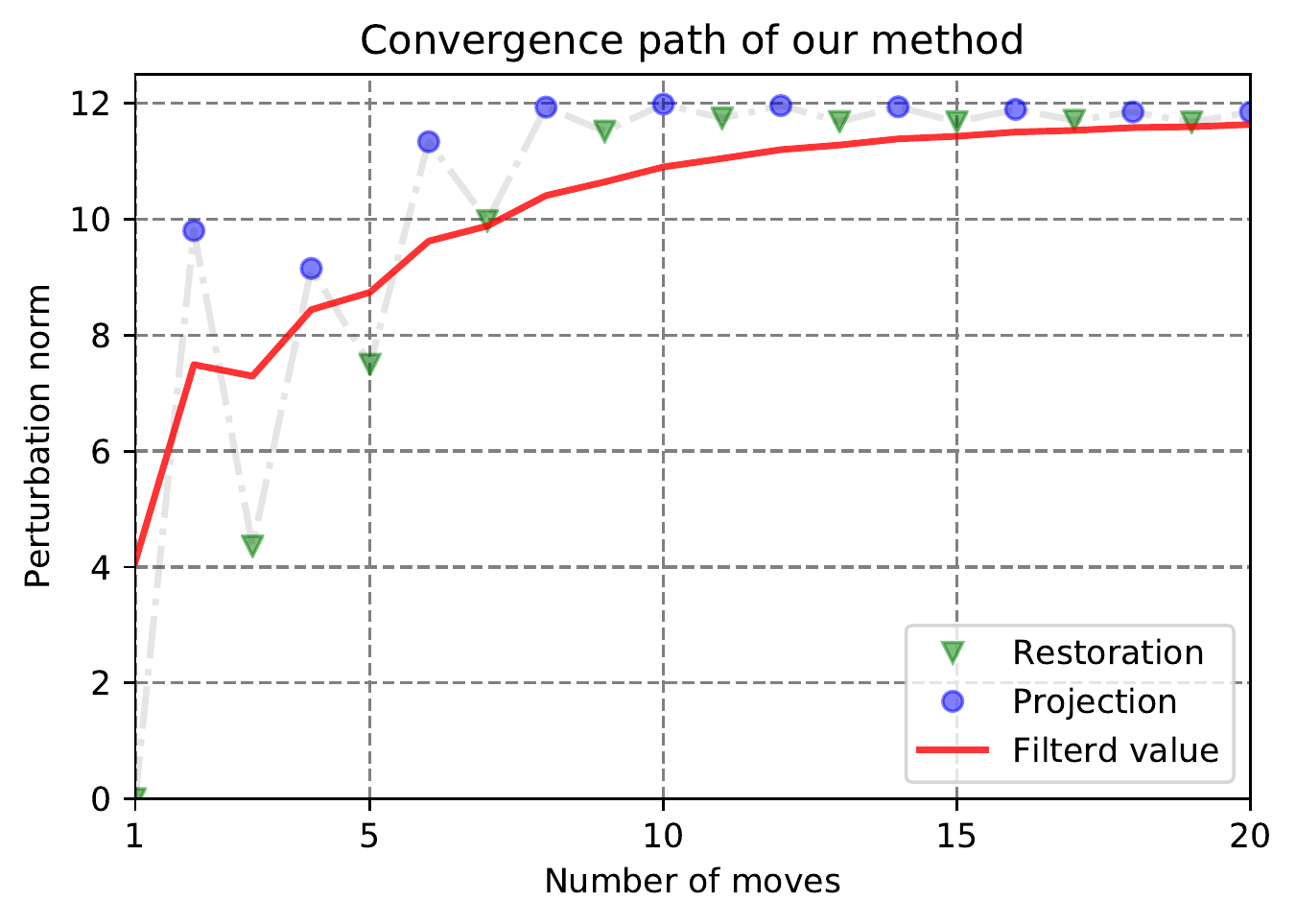}
    \end{subfigure}
\begin{subfigure}{0.49\textwidth}
            \centering
            \includegraphics[width=0.75\textwidth]{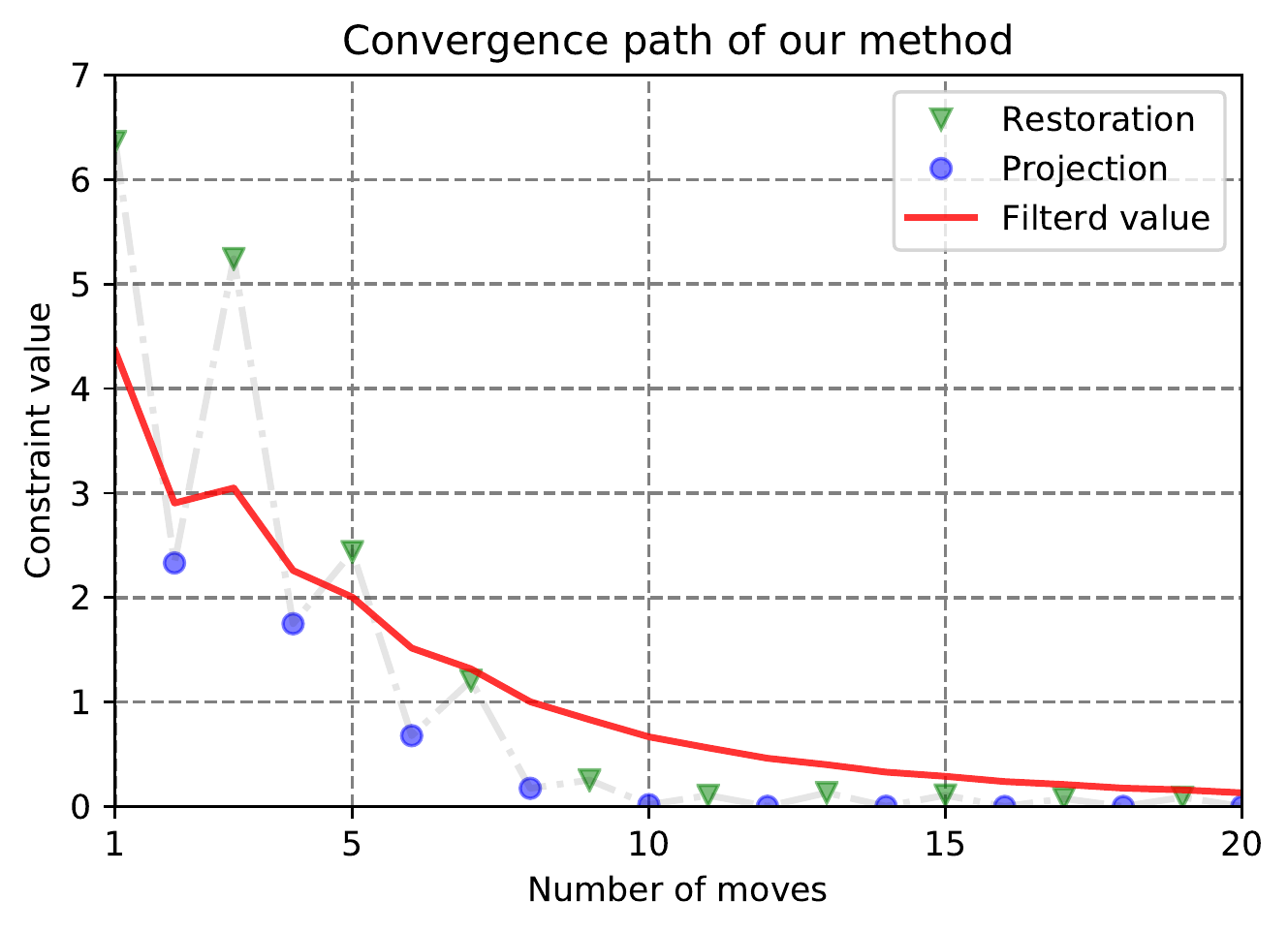}
    \end{subfigure}
    \caption{An empirical convergence curve of perturbation norm (left) and constraint value (right). The grey curves are the true convergence curve. The red curves are smoothes values to show the general convergence trend. A restoration move travels from a triangle to a circle. A projection move travels from a circle to a triangle.}
    \label{fig:converge_curve}
\end{figure*}

Figure \ref{fig:converge_path} illustrates a typical convergence path of \algname using $\ell_2$ norm and Eq.~\eqref{eq:project_s1} as an example.  The red dots on the right denote the original inputs $\bm x_0$ and its closest point on the decision boundary, $\bm x^*$. Suppose after iteration $k$, \algname reaches $\bm x^{(k)}$, denoted by the green dot on the left.  The restoration move travels directly towards the decision boundary by finding the normal direction to the current constraint contour. Then, the projection move travels along the tangent plane of the current constraint contour to reduce the distance to $\bm x_0$ while preventing the constraint value from deviating much.  As intuitively expected, the iteration should eventually approach $\bm x^*$. Figure \ref{fig:converge_curve} plots an empirical convergence curve of the perturbation norm and constraint value of \algnamens-$\ell_2$ on a randomly chosen CIFAR image.  Each move from a triangle to a circle dot is a restoration move, and from circle to triangle a projection move.  The red line is the smoothed version.  As can be seen, a restoration move reduces the constraint value while slightly increasing the constraint norm, and a projection move reduces the perturbation norm while slightly affecting the constraint value.  Both curves can converge.

\subsection{The Convergence Guarantee}
\label{subsec:guarantee}

The constraint function $c(\bm x)$ in Eq.~\eqref{eq:constraint} is nonconvex, thus the convergence analysis for \algname is limited to the vicinity of a unique local optimum, as stated in the following theorem.
\begin{theorem}
Denote $\bm x^*$ as one local optimum for Eq.~\eqref{eq:attack_problem}.  Assume $\nabla c(\bm x^*)$ exists.  Define projection matrices
\begin{equation}
\bm P = \bm I - \bm s(\bm x^*) (\nabla^T c(\bm x^*) \bm s(\bm x^*))^{-1} \nabla^T c(\bm x^*).
\end{equation}
Consider the neighborhood 
\begin{equation}
\mathbb{B} = \{\bm x: \lVert \bm P[\bm x - \bm x^*] \rVert_2 ^2 \leq X, |c(\bm x)| \leq C\},
\end{equation}
that satisfies the following assumptions:
\begin{enumerate}
\item \label{ass:difference} \emph{(Differentiability)} $\forall \bm x \in \mathbb{B}$, $\nabla c(\bm x)$ exists, but can be discontinuous, \emph{i.e.} all the discontinuity points of the gradient in $\mathbb{B}$ are jump discontinuities;

\item \label{ass:lipschitz} \emph{(Lipschitz Continuity at $\bm x^*$)} $\forall \bm x \in \mathbb{B}$, 
\begin{equation*}
\lVert \bm s(\bm x) - \bm s(\bm x^*)\rVert_2 \leq L_s \lVert \bm s(\bm x^*) \rVert_2 \lVert \bm x - \bm x^* \rVert_2;
\end{equation*}

\item \emph{(Bounded Gradient Norm)} $\forall \bm x \in \mathbb{B}$, 
\begin{equation*}
0 < m \leq \lVert \nabla c(\bm x) \rVert_2 \leq M;
\end{equation*}
\label{ass:bounded_grad_norm}

\item \label{ass:bounded_diff} \emph{(Bounded Gradient Difference)} $\exists \delta > 0$, $\forall \bm x, \bm y \in \mathbb{B}$ s.t. $\bm y - \bm x = l \bm s(\bm x)$ for some $l$,
\begin{equation*}
\nabla^T c(\bm y) \bm s(\bm x) \geq \delta \nabla^T c(\bm x) \bm s(\bm x);
\end{equation*}

\item \label{ass:convexity} \emph{(Constraint Convexity)} $\exists \gamma \in (0, 1)$, $\forall \bm x \in \mathbb{B}$,
\begin{equation*}
\begin{aligned}
    &(a^{(k)}\nabla d(\bm x - \bm x_0) + b^{(k)}\bm s(\bm x))^T \bm P^T \bm P (\bm x - \bm x_0) \\
    \geq & ~\gamma (\bm x - \bm x_0)^T \bm P^T \bm P (\bm x - \bm x_0);
\end{aligned}
\end{equation*}

\item \label{ass:unique} \emph{(Unique Optimality)} $\bm x^*$ is the only global optimum within $\mathbb{B}$;

\item \emph{(Constant Bounded Restoration Step Size)}
$\alpha^{(k)} = \alpha < M_\alpha$;\footnote{See Eq.~\eqref{eq:alpha_ass} for the definition of $M_\alpha$ in the appendix.}
\label{ass:alpha}

\item \emph{(Shrinking Projected Step Size)} $\beta^{(k)} < \beta / (k+k_0)^\nu$, where $0 < \nu < 1$ and $\beta \leq M_\beta$, $k_0 > m_k$;\footnote{See Eqs.~\eqref{eq:beta_bound} and \eqref{eq:k0_ass} for the definitions of $M_\beta$ and $m_k$ in the appendix.} $ |a^{(k)}| < M_a$, $|b^{(k)}| < M_b$;
\label{ass:beta}

\item \emph{(Presence in Neighborhood)} $\exists K$, $\bm x^{(K)} \in \mathrm{int}[\mathbb{B}]$, \emph{i.e.} the interior of $\mathbb{B}$.
\label{ass:initialization}
\end{enumerate}
Then we have the convergence guarantee 
$
\lim_{k \rightarrow \infty} \lVert \bm x^{(k)} - \bm x^* \rVert_2 = 0.
$
\label{thm:converge}
\end{theorem}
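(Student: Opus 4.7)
The plan is a two-scale Lyapunov analysis that tracks the tangential distance $V^{(k)} = \lVert \bm P(\bm x^{(k)} - \bm x^*)\rVert_2^2$ (progress along the constraint surface toward $\bm x^*$) in tandem with the constraint magnitude $E^{(k)} = |c(\bm x^{(k)})|$ (feasibility). Because the oblique projection $\bm P$ is constructed so that $\bm P \bm s(\bm x^*) = \bm 0$ and $\nabla^T c(\bm x^*)\bm P = \bm 0$, the Lipschitz assumption~\ref{ass:lipschitz} forces the projection-move direction to be nearly tangential, so that move truly descends $V^{(k)}$; the restoration move, which lies entirely in the $\bm s(\bm x^{(k)})$ direction, is the one that shrinks $E^{(k)}$. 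An induction on $k$ will simultaneously keep all iterates inside the neighborhood $\mathbb{B}$ provided at step $K$ by assumption~\ref{ass:initialization}, and both $V^{(k)}$ and $E^{(k)}$ will be driven to $0$. Splitting $\lVert \bm x^{(k)} - \bm x^*\rVert_2^2 \lesssim V^{(k)} + (E^{(k)})^2/m^2$ via the decomposition into tangent and normal components (with $m$ from assumption~\ref{ass:bounded_grad_norm}) then delivers the stated limit.

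For the restoration move I would combine Eq.~\eqref{eq:resotre_solution} with a mean-value argument along the segment $\bm x^{(k)} \to \bm z^{(k)}$, which points in the $\bm s(\bm x^{(k)})$ direction. Assumption~\ref{ass:bounded_diff} applies directly to this segment and produces
\begin{equation*}
c(\bm z^{(k)}) = c(\bm x^{(k)}) + \nabla^T c(\bm \xi)(\bm z^{(k)} - \bm x^{(k)}) = (1 - \alpha \eta)\, c(\bm x^{(k)})
\end{equation*}
for some $\bm \xi$ on the segment and some $\eta \geq \delta$; the jump-discontinuity allowance in assumption~\ref{ass:difference} is precisely what makes this valid for piecewise-linear networks. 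Choosing $\alpha < M_\alpha$ small enough so that $|1 - \alpha \eta| < 1$ uniformly (the content of assumption~\ref{ass:alpha}) yields a strict geometric contraction $|c(\bm z^{(k)})| \leq \rho\, E^{(k)}$ with $\rho < 1$. Assumption~\ref{ass:bounded_grad_norm} additionally bounds $\lVert \bm z^{(k)} - \bm x^{(k)}\rVert_2 = O(E^{(k)})$, so any lateral drift of $V^{(k)}$ during restoration is controlled by $E^{(k)}$.

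For the projection move I expand
\begin{equation*}
V^{(k+1)} = \bigl\lVert \bm P(\bm z^{(k)} - \bm x^*) - \beta^{(k)} \bm P\bigl(a^{(k)} \nabla d(\bm z^{(k)} - \bm x_0) + b^{(k)} \bm s(\bm z^{(k)})\bigr)\bigr\rVert_2^2.
\end{equation*}
The cross term is lower-bounded by assumption~\ref{ass:convexity}; after translating its base point from $\bm x_0$ to $\bm x^*$ (a technical step that exploits the KKT condition $\nabla d(\bm x^* - \bm x_0)\propto \nabla c(\bm x^*)$ to control $\bm P(\bm x^* - \bm x_0)$) this produces the descent contribution $-2\gamma\beta^{(k)} V^{(k)}$. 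The squared term is $O((\beta^{(k)})^2)$ by assumptions~\ref{ass:bounded_grad_norm} and~\ref{ass:beta}; its seemingly dangerous piece $\bm P \bm s(\bm z^{(k)})$ is actually $\bm P(\bm s(\bm z^{(k)}) - \bm s(\bm x^*))$, which by assumption~\ref{ass:lipschitz} is $O(\lVert \bm z^{(k)} - \bm x^*\rVert_2)$ and hence absorbable. Combining with the restoration bound, the recursion takes the schematic form
\begin{equation*}
V^{(k+1)} \leq (1 - 2\gamma\beta^{(k)})\, V^{(k)} + C_1 (\beta^{(k)})^2 + C_2\, \beta^{(k)} E^{(k)} + C_3 (E^{(k)})^2.
\end{equation*}

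The endgame is a Robbins--Siegmund-style deterministic convergence lemma applied to this recursion. With $\beta^{(k)} \leq \beta/(k+k_0)^\nu$ for $\nu \in (0,1)$ we have $\sum_k \beta^{(k)} = \infty$, which forces $V^{(k)} \to 0$, while the geometric decay of $E^{(k)}$ makes $\sum_k \beta^{(k)} E^{(k)}$ and $\sum_k (E^{(k)})^2$ finite so the perturbation is summable. The lower bound $k_0 > m_k$ in assumption~\ref{ass:beta} is used to ensure that $2\gamma\beta^{(k)}$ dominates the positive second-order term from the very first iteration in $\mathbb{B}$, so contraction holds uniformly from step $K$ onward. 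I anticipate the main obstacle to be precisely this invariance step: the induction must guarantee that $V^{(k)}$ and $E^{(k)}$ never jump high enough to eject $\bm x^{(k)}$ from $\mathbb{B}$, which requires a calibrated choice of $M_\alpha$, $M_\beta$, $m_k$ so that each iterate's error budget remains small relative to the current $V^{(k)}$. Assumption~\ref{ass:unique} is invoked only at the very end, to certify that no other local optimum inside $\mathbb{B}$ could intercept the limit.
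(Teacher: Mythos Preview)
Your outline follows the paper's proof closely: track the projected (tangential) distance and the constraint magnitude separately, show both vanish, then combine. Since the KKT-type observation you cite is exactly the paper's Lemma~\ref{lem:project_optimality}, namely $\bm P(\bm x^* - \bm x_0)=\bm 0$, your $V^{(k)}$ coincides with the paper's $\lVert\bm P(\bm x^{(k)}-\bm x_0)\rVert_2^2$, so the two Lyapunov pairs are identical.

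There is, however, a genuine gap in your treatment of $E^{(k)}$. You correctly derive the restoration contraction $|c(\bm z^{(k)})|\le\rho\,E^{(k)}$, but you never account for what the \emph{projection} move does to the constraint. From Eq.~\eqref{eq:project} and the $M$-Lipschitz bound implied by assumption~\ref{ass:bounded_grad_norm},
\[
|c(\bm x^{(k+1)})| \;\le\; |c(\bm z^{(k)})| + M\lVert \bm x^{(k+1)}-\bm z^{(k)}\rVert_2 \;\le\; \rho\,E^{(k)} + O(\beta^{(k)}),
\]
so the projection step re-inflates the constraint by $O(\beta^{(k)})$ at every iteration. The resulting recursion does \emph{not} give geometric decay; it equilibrates at $E^{(k)}=O(\beta^{(k)})$, which is only polynomial in $k$ (this is the content of the paper's Lemma~\ref{lem:c_converge}). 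Consequently your summability claim ``$\sum_k\beta^{(k)}E^{(k)}$ and $\sum_k(E^{(k)})^2$ finite'' fails whenever $\nu\le 1/2$, since then $\beta^{(k)}E^{(k)}\sim(\beta^{(k)})^2\sim k^{-2\nu}$ is not summable. The repair is to feed $E^{(k)}=O(\beta^{(k)})$ directly into your $V$-recursion, obtaining $V^{(k+1)}\le(1-2\gamma\beta^{(k)})V^{(k)}+O((\beta^{(k)})^2)$; this still forces $V^{(k)}\to 0$ using only $\sum_k\beta^{(k)}=\infty$ and $\beta^{(k)}\to 0$, without any summability, exactly as the paper argues in Lemma~\ref{lem:p_converge}. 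Relatedly, the condition $k_0>m_k$ is used in the paper to control the ratio $\beta^{(k)}/\beta^{(k+1)}$ in the $E$-recursion (so that $E^{(k)}/\beta^{(k)}$ stays bounded), not to make $2\gamma\beta^{(k)}$ dominate the second-order term in $V$ as you suggest.
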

The proof will be presented in the appendix.  Here are a few remarks.  First, assumption~\ref{ass:difference} allows jump discontinuities in $\nabla c(\bm x)$ almost everywhere, which is a very practical assumption for deep neural networks.  Most neural network operations, such as ReLU and max-pooling, as well as the max operation in Eq.~\eqref{eq:constraint}, introduce nothing beyond jump discontinuities in gradient. 

Second, assumption~\ref{ass:bounded_grad_norm} does require the constraint gradient to be lower bounded, which may lead to concerns that \algname may fail in the presence of gradient masking \citep{papernot2017practical}.  However, notice that the gradient boundedness assumption is only imposed in $\mathbb{B}$, which is in the vicinity of the decision boundary, whereas gradient masking is most likely to appear away from the decision boundary and where the input features are populated.  Besides, as will be discussed later, a random initialization as in PGD will be adopted to bypass regions with gradient masking.  Experiments on adversarially trained models also verify the robustness of \algnamens.

Finally, assumption~\ref{ass:convexity} does not actually stiple that the decision boundary is convex. It only stipulates that $c(\bm x)$ is convex or ``not too concave'' in $\mathbb{B}$ (and thus so is the constraint set $c(\bm x) \leq 0$), so that the first order optimality condition can readily imply local minimum instead of a local maximum.  In fact, it can be shown that assumption~\ref{ass:convexity} can be implied if $c(\bm x)$ is convex in $\mathbb{B}$.\footnote{See Thm.~\ref{thm:convexity} in the appendix.} Fig.~\ref{fig:convexity} displays some example decision boundaries permitted by Thm.~\ref{thm:converge}.

\begin{figure*}
\centering
\includegraphics[width=1\textwidth]{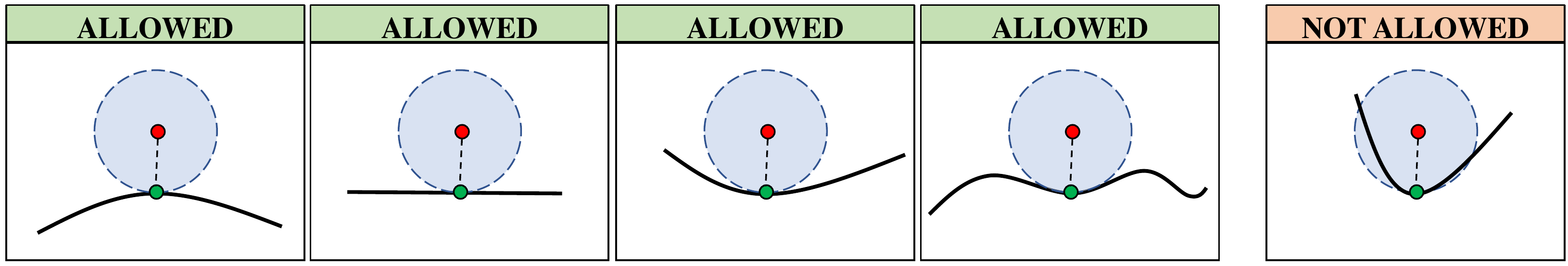}
\label{fig:convexity}
\caption{An illustration of the convexity assumption in Thm.~\ref{thm:converge}, using $\ell_2$ norm as an example.  The red dot denotes the input sample to be attacked. The blank line denotes the decision boundary.  The green dot denotes the tangent point between the $\ell_2$ ball centered at the input sample and the decision boundary. The left four cases are permitted by Thm.~\ref{thm:converge} and the rightmost case is prohibited.  The convexity assumption in fact permits a wide range of different boundary shape, even if the constraint set is concave, as long as it does not bend more than the $\ell_2$ ball does. }
\label{fig:convexity}
\end{figure*}

\subsection{Additional Implementation Details}

There are a few additional implementation details as outlined below.

\textbf{Box Constraint:} In many applications, each dimension of the input features should be bounded, i.e. $\bm x \in [x_\textrm{min}, x_\textrm{max}]^n$.  To impose the box constraint, the restoration move problem as in Eq.~\eqref{eq:restore} is modified as
\begin{equation}
\begin{aligned}
&\bm z^{(k)} = \argmin_{\bm x \in [x_\textrm{min}, x_\textrm{max}]^n}~d(\bm x - \bm x^{(k)})\\
\text{s.t.}~&\nabla^T c(\bm x^{(k)}) (\bm x - \bm x^{(k)}) = -\alpha^{(k)} c(\bm x^{(k)}),
\end{aligned}
\label{eq:restore_box}
\end{equation}
whose solution is
\begin{equation}
\begin{aligned}
    \bm z^{(k)} = \textrm{Proj}_{[x_\textrm{min}, x_\textrm{max}]^n} \{ \tilde{\bm z}^{(k)}\}, 
    \label{eq:restore_box_solution}
\end{aligned}
\end{equation}
where
\begin{equation}
\small
\begin{aligned}
& \tilde{\bm z}^{(k)}=\\
& \bm x^{(k)} - \frac{\alpha^{(k)}c(\bm x^{(k)}) + \sum_{i \in \mathbb{I}^C} \nabla_i c(\bm x^{(k)})(\bm z^{(k)}_i - \bm x^{(k)}_i)}{\sum_{i \in \mathbb{I}}\nabla_i c(\bm x^{(k)}) \bm s_i(\bm x^{(k)})} \bm s (\bm x^{(k)}).
\end{aligned}
\end{equation}
Proj$(\cdot)$ is an operator that projects the vector in its argument onto the subset in its subscript.  $\mathbb{I}$ is a set of indices with which the elements in $\tilde{\bm z}^{(k)}$ satisfy the box constraint, and $\mathbb{I}^C$ is its complement.  $\mathbb{I}$ is determined by running Eq.~\eqref{eq:restore_box_solution} iteratively and updating $\mathbb{I}$ after each iterations.

Unlike other attack algorithms that simply project the solution onto the constraint box, \algname incorporates the box constraint in a principled way, such that any local optimal solution $\bm x^{*}$ will be an invariant point of the restoration move. Thus the convergence is faster.

\textbf{Target Scan:} According to Eq.~\eqref{eq:constraint}, each restoration move essentially approaches the adversarial class with the highest logit, but the class with the highest logit may not be the closest.  To mitigate the problem, we follow a similar approach adopted in DeepFool, which we call \textit{target scan}. Target scan performs a target-specific restoration move towards each class, and chooses the move with the shortest distance.  Formally, target scan introduces a set of target-specific constraints $\{c_i (\bm x) = l_t(\bm x) - l_i(\bm x) - \varepsilon\}$.  
A restoration move with target scan solves
\begin{equation}
    \bm z^{(k)} = \argmin_{i \in \mathbb{A}} d(\bm z^{(k, i)} - \bm x_0),
    \label{eq:target_scan}
\end{equation}
where $\bm z^{(k, i)}$ is the solution to Eqs.~\eqref{eq:restore} or~\eqref{eq:restore_box} with $c(\bm x^{(k)})$ replaced with $c_i(\bm x^{(k)})$, and thus is equal to Eqs.~\eqref{eq:resotre_solution} or~\eqref{eq:restore_box_solution} with $c(\bm x^{(k)})$ replaced with $c_i(\bm x^{(k)})$.  $\mathbb{A}$ is a set of candidate adversarial calsses, which can be all the incorrect classes if the number of classes is small, or which can be a subset of the adversarial classes with the highest logits otherwise.  Experiments show that target scan is necessary only in the first few restoration moves, when the closest and highest adversarial classes are likely to be distinct.  Therefore, the computation cost will not increase too much.

\textbf{Initialization:} The initialization of $\bm x^{(0)}$ can be either deterministic or random as follows
\begin{equation}
\begin{aligned}
    &\bm x^{(0)} = \bm x_0 \mbox{~~(Deterministic),}\\
    &\bm x^{(0)} =  \bm x_0 + \bm u, ~~\bm u \sim \mathcal{U}\{[-u, u]^n\} \mbox{~~(Random)},
    \label{eq:init}
\end{aligned}
\end{equation}
where $\mathcal{U}\{[-u, u]^n\}$ denotes the uniform random distribution in $[-u, u]^n$.  Similar to PGD, we can perform multiple trials with random initialization to find a better local optimum.

\textbf{Final Tuning} \algname can only cause misclassification when $c(\bm x) \leq \varepsilon$.  To make sure the attack is successful, the final iterations of \algname consists of restoration moves only, and no projection moves, until a misclassification is caused.  This can also ensure the final solution satisfies the box constraint (because only the restoration move incorporates the box constraint).

\textbf{Summary:} Alg.~\ref{alg:marginattack} summarizes the \algname procedure.  As for the complexity, each restoration move or projection move requires only one backward propagation, and thus the computational complexity of each move is comparable to one iteration of most attack algorithms.



 
 

\section{Experiments}
\label{sec:exper}

This section compares \algname with several state-of-the-art adversarial attack algorithms in terms of the perturbation norm and computation time on image classification benchmarks.

\subsection{Attacking Regular Models}
\label{subsec:regular_models}

\subsubsection{Configurations}
Three regularly trained models are evaluated on.
\begin{itemize}[leftmargin=*]
\item \textbf{MNIST} \citep{lecun1998gradient}\textbf{:} The classifier is a stack of two $5\times 5$ convolutional layers with 32 and 64 filters respectively, followed by two fully-connected layers with 1,024 hidden units.
\item \textbf{CIFAR10} \citep{krizhevsky2009learning}\textbf{:} The classifier is a pre-trained ResNet32 \citep{he2016deep} provided by TensorFlow.\footnote{\url{https://github.com/tensorflow/models/tree/master/official}}.
\item \textbf{ImageNet} \citep{russakovsky2015imagenet}\textbf{:} The classifier is a pre-trained ResNet50 \citep{he2016deep} provided by TensorFlow Keras\footnote{\url{https://www.tensorflow.org/api_docs/python/tf/keras/applications/ResNet50}}.  Evaluation is on a validation subset containing 10,000 images.
\vspace*{-0.05in}
\end{itemize}
The range of each pixel is $[0, 1]$ for MNIST, and $[0, 255]$ for CIFAR10 and ImageNet.  The settings of \algname and baselines are listed below.   Unless stated otherwise, the baseline algorithms are implemented by cleverhans \citep{papernot2017cleverhans}.  The hyperparameters are set to defaults if not specifically stated.
\begin{itemize}[leftmargin=*]
\item \textbf{CW} \citep{carlini2017towards}\textbf{:} The target and evaluation norm is $\ell_2$.  The learning rate is set to 0.05 for MNIST, 0.001 for CIFAR10 and 0.01 for ImageNet, which are tuned to its best performance.  Since CW attack has a accuracy-efficiency tradeoff controlled by the number of binary search steps, we implemented three versions of CW, called CW Bin 3, CW Bin 5 and CW Bin10, where number of binary steps for multiplier search is set to 3, 5 and 10 respectively. For ImageNet, due to running time constraints, only CW Bin5 is implemented.
\item \textbf{DeepFool} \citep{moosavi2016deepfool}\textbf{:} The evaluation norm is $\ell_2$. 
\item \textbf{FGSM} \citep{goodfellow2014explaining}\textbf{:} FGSM is implemented by authors.  The step size is searched to achieve zero-confidence attack. The evaluation distance metric is $\ell_\infty$.
\item \textbf{PGD} \citep{madry2017towards}\textbf{:} The target and evaluation norm are $\ell_\infty$.  The learning rate is set to 0.01 for MNIST, and 0.05 for CIFAR10 and 0.1 for ImageNet.
\item \textbf{\algnamens:} Two versions of \algname are implemented, whose target and evaluation norms are $\ell_2$, and $\ell_\infty$, respectively. The hyperparmeters are detailed in Table~\ref{tab:margin_setting} in the appendix.  The first 10 restoration moves are with target scan, and the last 20 moves are all restoration moves.
\vspace*{-0.05in}
\end{itemize}
The number of iterations/moves is set to 2,000 for CW, 200 with 10 random starts for PGD and \algname (except for ImageNet where there is only one random run), and 200 for the rest.

\begin{algorithm}[!t]
\caption{\algname Procedure}
 \label{alg:marginattack}
 \begin{algorithmic}
 \STATE \textbf{Input:} A set of logit functions $l_{0:C-1}(\bm x)$; \\an input feature $\bm x_0$ and its label $t$;
 \STATE \textbf{Output:} A solution $\bm \tilde{\bm x}^*$ to Eq.~\eqref{eq:attack_problem}
 
 \STATE Initialize $\bm x^{(0)}$ according to Eq.~\eqref{eq:init}\;
 \FOR{$k <$ number of iterations}
  \IF{$k <$ number of target scan iterations}
  \STATE Do target scan restoration move as in Eq.~\eqref{eq:target_scan}\;
  \ELSE{}
  \STATE Do regular restoration move as in Eqs.~\eqref{eq:restore} or \eqref{eq:restore_box}\;
  \ENDIF
  
  \IF{$k <$ final tuning iteration}
  \STATE Do projection move as in Eqs.~\eqref{eq:project}\;
  \ELSE{}
  \STATE Skip projection move: $\small\bm x^{(k+1)} = \bm z^{(k)}$\;
  \ENDIF
 \ENDFOR
\end{algorithmic}
\end{algorithm}

\begin{table*}[t!]
\centering
\caption{Success rate (\%) of $\ell_2$ adversarial attacks under given perturbation norms.}
\label{tab:success_rate_l2}
\begin{tabular}{lccc}
\hline
Algorithm                                    & \textsc{Mnist}          & \textsc{Cifar}            & \textsc{ImageNet}         \\ 
     & 1.00 / 1.41 / 1.73 / 2        & 8 / 15 / 25 / 40          & 10 / 32 / 50 / 80         \\ \cline{2-4}  
DeepFool &  17.4 / 44.9 / 69.9 / 84.7  & 19.4 / 34.5 / 52.6 / 73.3 & 38.7 / 55.1 / 66.3 / 79.8 \\
CW Bin3  &  24.6 / 62.7 / 85.7 / 95.3  & 23.9 / 45.2 / 71.6 / 91.5 &  N/A \\ 
CW Bin5  &  24.8 / 63.1 / 85.9 / 95.4  & 24.0 / 45.4 / 71.7 / 91.8 & 40.4 / 58.7 / 71.0 / 85.6 \\
CW Bin10  &  24.9 / 63.2 / 86.0 / 95.4  & 24.0 / 45.6 / 71.8 / 92.1 &  N/A \\ 
\algnamens-$\ell_2$ &  25.7 / 66.3 / 88.4 / 97.1  & 24.3 / 46.3 / 73.9 / 93.3 & 40.4 / 60.6 / 74.9 / 89.4 \\
                                     
                                     \hline
\end{tabular}
\end{table*}

\begin{table*}[t]
\centering
\caption{Success rate (\%) of $\ell_\infty$ adversarial attacks under given perturbation norms.}
\label{tab:success_rate_linf}
\begin{tabular}{lccc}
\hline
Algorithm                                    & \textsc{Mnist}          & \textsc{Cifar}            & \textsc{ImageNet}         \\ 
     & 0.06 / 0.08 / 0.10 / 0.12        & 0.2 / 0.4 / 0.6 / 1          & 0.05 / 0.1 / 0.2 / 0.4         \\ \cline{2-4}  
FGSM &  7.55 / 13.9 / 24.9 / 35.4  & 18.5 / 31.0 / 41.1 / 54.7 & 39.8 / 47.2 / 60.1 / 75.3 \\
PGD  &  17.1 / 42.2 / 73.7 / 91.8  & 18.9 / 38.9 / 59.1 / 84.1 & 40.4 / 49.8 / 68.8 / 90.6 \\
\algnamens-$\ell_\infty$ &  18.1 / 43.0 / 74.1 / 92.1  & 21.1 / 42.2 / 62.6 / 87.3 & 41.5 / 51.3 / 69.0 / 90.8 \\
                                     
                                     \hline
\end{tabular}
\end{table*}

\subsubsection{Results and Analyses}

For each dataset, we choose four perturbation levels and compute the success rate of the attacks.  The perturbation levels are chosen to roughly follow the 0.2, 0.4, 0.6 and 0.8 quantiles of the \algname margins.   Table~\ref{tab:success_rate_l2} compares the success rates under the chosen quantiles among the $\ell_2$ attacks, and Table~\ref{tab:success_rate_linf} comapres among the $\ell_\infty$ attacks.  Among the $\ell_2$ attacks, we have two observations.  First, the DeepFool performs the worst among all the algorithms, which is reasonable because DeepFool attempts to achieve high efficiency at the cost of accuracy.  Second, for CW attack, we can see a clear efficiency-accuracy tradeoff.  As the number of binary search steps increases, the accuracy increases.  However, even with $10$ binary search steps, where the computation load is already very high, its attack success rate is still outperformed by \algnamens.  \algname maintains a 3\% advantage on MNIST,  1\% on CIFAR10 and 3\% on ImageNet.

Among the $\ell_\infty$ attacks, we also have two observations.  First, similar to the case of DeepFool, FGSM has a significantly lower attack success rate.  Second, we can see that \algname outperforms PGD, which is currently the state-of-the-art $\ell_\infty$ attack, under all the perturbation levels.

\subsection{Attacking Adversarially Trained Model}

We also evaluate \algname on the MNIST Adversarial Examples Challenge\footnote{\url{https://github.com/MadryLab/mnist_challenge}}, which is a challenge of attacking an MNIST model adversarially trained using PGD with 0.3 perturbation level.  Same as the PGD baseline listed, \algname is run with 50 random starts, and the initialization perturbation range $u = 0.3$.  The number of moves is 500.  The target norm is $\ell_\infty$.  $b_n = 5$ and $a_n$ is set as in Eq.~\eqref{eq:project_s2}.  The rest of the configuration is the same as in the previous experiments.

Table~\ref{tab:madry_challenge} lists the success rates of different attacks under 0.3 perturbation level.  The baseline algorithms are all fix-perturbation attacks, and their results are excerpted from the challenge white-box attack leaderboard.  As can be seen, \algnamens, as the only zero-confidence attack algorithm, has the second best result, which shows that it performs competitively against the state-of-the-art fix-perturbation attacks.

\begin{table}[!tb]
    \centering
    \caption{Success rate under 0.3 perturbation norm of the MNIST Adversarial Examples Challenge.}
    \begin{tabular}{l c}
         \hline
     Algorithm &  Success Rate (\%)\\
\hline
   \citet{zheng2018distributionally}  & 11.21\\
   \algname ($\ell_\infty$)  &  11.16\\
    1st-Order on Logit Diff   & 11.15 \\
    PGD on Cross-Entropy Loss   & 10.38 \\
    PGD on CW Loss  & 10.29\\
    \hline
    \end{tabular}
    \label{tab:madry_challenge}
\end{table}

\begin{table}[!tb]
    \centering
    \caption{Running time comparison (in seconds) on a single batch of images. }
    \begin{tabular}{lccc}
\hline
Algorithm                              & \textsc{Mnist} & \textsc{Cifar} & \textsc{ImageNet} \\ \hline
CW Bin3           & 5.77            & 100.10         & N/A            \\
CW Bin5           & 8.99            & 168.88         & 872.28            \\
CW Bin10          & 16.02            & 350.10         & N/A            \\
\algnamens-$\ell_2$                  & 3.01            & 51.03          & 248.82            \\ 
DeepFool    & 1.14            & 21.26          & 44.41             \\
PGD & 0.87            & 33.17          & 46.3              \\
FSGM   & 0.11            & 0.95           & 10.05             \\
\hline
\end{tabular}
    \label{tab:time}
\end{table}


\subsection{Convergence}

We would like to revisit the convergence plot of the constraint value $c(\bm x)$ and perturbation norm $d(\bm x)$ of as in Fig.~\ref{fig:converge_curve}.  We can see that \algname converges very quickly.  In the example shown in the figure, it is able to converge within 20 moves.  Therefore, \algname can be greatly accelerated by cutting the number of iterations to, \emph{e.g.} 15, and still produces a decent attack.  This result shows greater insight into the efficiency of \algnamens.  
If margin accuracy is the priority, a large number of moves, \emph{e.g.} 200 as in our experiment, would help.  However, if efficiency is the priority, a small number of moves, \emph{e.g.} 30, suffices to produce a decent attack, which further improves \algnamens's efficiency.


To further assess the efficiency of \algnamens, Tab.~\ref{tab:time} compares the running time (in seconds) of attacking one batch of images, implemented on a single NVIDIA TESLA P100 GPU.  The batch size is 200 for MNIST and CIFAR10, and 100 for ImageNet. The settings are the same as stated in section~\ref{subsec:regular_models}, except that for a better comparison, the number of iterations of CW is cut down to 200, and PGD and \algname runs one random pass, so that all the algorithms have the same iteration/moves. Only the $\ell_2$ versions of \algname are shown because the other versions have similar run times.  As shown, running time of \algname is much shorter than CW even with 3 binary search steps, and is comparable to DeepFool and PGD. CW is significantly slower that the other algorithms because it has to run multiple trials to search for the best Lagrange multiplier.  Note that DeepFool and CW enable early stop, but \algname does not.  Considering \algnamens's fast convergence rate, the running time can be further reduced by early stop, which leads to a very efficient attack algorithm. 

It is also worthwhile to mention that PGD is the only fix-perturbation attack algorithm and the listed running time only involves running one pass of of the attack algorithm. If one wants to measure the margin of a particular input example, multiple passes of PGD is needed, whereas all the zero-confidence attacks still only need to run once.

\section{Conclusion}
\label{sec:conclu}

We have proposed \algnamens, a novel zero-confidence adversarial attack algorithm that is better able to find a smaller perturbation that results in misclassification.  Both theoretical and empirical analyses have demonstrated that \algname is an efficient, reliable and accurate adversarial attack algorithm, and establishes a new state-of-the-art among zero-confidence attacks.  What is more, \algname still has room for improvement.  So far, only two settings of $a^{(k)}$ and $b^{(k)}$ are developed, but \algname will work for many other settings, as long as assumption~\ref{ass:convexity} is satisfied.  Authors hereby encourage exploring novel and better settings for the \algname framework, and promote \algname as a new robustness evaluation measure or baseline in the field of adversarial attack and defense.

\bibliography{marginattack}

\begin{thebibliography}{22}
\providecommand{\natexlab}[1]{#1}
\providecommand{\url}[1]{\texttt{#1}}
\expandafter\ifx\csname urlstyle\endcsname\relax
  \providecommand{\doi}[1]{doi: #1}\else
  \providecommand{\doi}{doi: \begingroup \urlstyle{rm}\Url}\fi

\bibitem[Akhtar \& Mian(2018)Akhtar and Mian]{akhtar2018threat}
Akhtar, N. and Mian, A.
\newblock Threat of adversarial attacks on deep learning in computer vision: A
  survey.
\newblock \emph{arXiv preprint arXiv:1801.00553}, 2018.

\bibitem[Brendel et~al.(2017)Brendel, Rauber, and Bethge]{brendel2017decision}
Brendel, W., Rauber, J., and Bethge, M.
\newblock Decision-based adversarial attacks: Reliable attacks against
  black-box machine learning models.
\newblock \emph{arXiv preprint arXiv:1712.04248}, 2017.

\bibitem[Carlini \& Wagner(2017)Carlini and Wagner]{carlini2017towards}
Carlini, N. and Wagner, D.
\newblock Towards evaluating the robustness of neural networks.
\newblock In \emph{Security and Privacy (SP), 2017 IEEE Symposium on}, pp.\
  39--57. IEEE, 2017.

\bibitem[Goodfellow(2018)]{goodfellow2018gradient}
Goodfellow, I.
\newblock Gradient masking causes {CLEVER} to overestimate adversarial
  perturbation size.
\newblock \emph{arXiv preprint arXiv:1804.07870}, 2018.

\bibitem[Goodfellow et~al.(2014)Goodfellow, Shlens, and
  Szegedy]{goodfellow2014explaining}
Goodfellow, I.~J., Shlens, J., and Szegedy, C.
\newblock Explaining and harnessing adversarial examples.
\newblock \emph{arXiv preprint arXiv:1412.6572}, 2014.

\bibitem[He et~al.(2016)He, Zhang, Ren, and Sun]{he2016deep}
He, K., Zhang, X., Ren, S., and Sun, J.
\newblock Deep residual learning for image recognition.
\newblock In \emph{Proceedings of the IEEE conference on computer vision and
  pattern recognition}, pp.\  770--778, 2016.

\bibitem[Ilyas et~al.(2018)Ilyas, Engstrom, Athalye, and Lin]{ilyas2018black}
Ilyas, A., Engstrom, L., Athalye, A., and Lin, J.
\newblock Black-box adversarial attacks with limited queries and information.
\newblock \emph{arXiv preprint arXiv:1804.08598}, 2018.

\bibitem[Krizhevsky \& Hinton(2009)Krizhevsky and
  Hinton]{krizhevsky2009learning}
Krizhevsky, A. and Hinton, G.
\newblock Learning multiple layers of features from tiny images.
\newblock 2009.

\bibitem[Kurakin et~al.(2016{\natexlab{a}})Kurakin, Goodfellow, and
  Bengio]{kurakin2016adversarial}
Kurakin, A., Goodfellow, I., and Bengio, S.
\newblock Adversarial examples in the physical world.
\newblock \emph{arXiv preprint arXiv:1607.02533}, 2016{\natexlab{a}}.

\bibitem[Kurakin et~al.(2016{\natexlab{b}})Kurakin, Goodfellow, and
  Bengio]{kurakin2016adversarial2}
Kurakin, A., Goodfellow, I., and Bengio, S.
\newblock Adversarial machine learning at scale.
\newblock \emph{arXiv preprint arXiv:1611.01236}, 2016{\natexlab{b}}.

\bibitem[LeCun et~al.(1998)LeCun, Bottou, Bengio, and
  Haffner]{lecun1998gradient}
LeCun, Y., Bottou, L., Bengio, Y., and Haffner, P.
\newblock Gradient-based learning applied to document recognition.
\newblock \emph{Proceedings of the IEEE}, 86\penalty0 (11):\penalty0
  2278--2324, 1998.

\bibitem[Madry et~al.(2017)Madry, Makelov, Schmidt, Tsipras, and
  Vladu]{madry2017towards}
Madry, A., Makelov, A., Schmidt, L., Tsipras, D., and Vladu, A.
\newblock Towards deep learning models resistant to adversarial attacks.
\newblock \emph{arXiv preprint arXiv:1706.06083}, 2017.

\bibitem[Moosavi~Dezfooli et~al.(2016)Moosavi~Dezfooli, Fawzi, and
  Frossard]{moosavi2016deepfool}
Moosavi~Dezfooli, S.~M., Fawzi, A., and Frossard, P.
\newblock {DeepFool}: a simple and accurate method to fool deep neural
  networks.
\newblock In \emph{Proceedings of 2016 IEEE Conference on Computer Vision and
  Pattern Recognition (CVPR)}, number EPFL-CONF-218057, 2016.

\bibitem[Nicolas~Papernot(2017)]{papernot2017cleverhans}
Nicolas~Papernot, Nicholas~Carlini, I. G. R. F. F. F. A. M. K. H. Y.-L. J. A.
  K. R. S. A. G. Y.-C.~L.
\newblock cleverhans v2.0.0: an adversarial machine learning library.
\newblock \emph{arXiv preprint arXiv:1610.00768}, 2017.

\bibitem[Papernot et~al.(2016)Papernot, McDaniel, Jha, Fredrikson, Celik, and
  Swami]{papernot2016limitations}
Papernot, N., McDaniel, P., Jha, S., Fredrikson, M., Celik, Z.~B., and Swami,
  A.
\newblock The limitations of deep learning in adversarial settings.
\newblock In \emph{Security and Privacy (EuroS\&P), 2016 IEEE European
  Symposium on}, pp.\  372--387. IEEE, 2016.

\bibitem[Papernot et~al.(2017)Papernot, McDaniel, Goodfellow, Jha, Celik, and
  Swami]{papernot2017practical}
Papernot, N., McDaniel, P., Goodfellow, I., Jha, S., Celik, Z.~B., and Swami,
  A.
\newblock Practical black-box attacks against machine learning.
\newblock In \emph{Proceedings of the 2017 ACM on Asia Conference on Computer
  and Communications Security}, pp.\  506--519. ACM, 2017.

\bibitem[Rosen(1961)]{rosen1961gradient}
Rosen, J.
\newblock The gradient projection method for nonlinear programming. part ii.
  nonlinear constraints.
\newblock \emph{Journal of the Society for Industrial and Applied Mathematics},
  9\penalty0 (4):\penalty0 514--532, 1961.

\bibitem[Russakovsky et~al.(2015)Russakovsky, Deng, Su, Krause, Satheesh, Ma,
  Huang, Karpathy, Khosla, Bernstein, et~al.]{russakovsky2015imagenet}
Russakovsky, O., Deng, J., Su, H., Krause, J., Satheesh, S., Ma, S., Huang, Z.,
  Karpathy, A., Khosla, A., Bernstein, M., et~al.
\newblock Imagenet large scale visual recognition challenge.
\newblock \emph{International Journal of Computer Vision}, 115\penalty0
  (3):\penalty0 211--252, 2015.

\bibitem[Su et~al.(2017)Su, Vargas, and Kouichi]{su2017one}
Su, J., Vargas, D.~V., and Kouichi, S.
\newblock One pixel attack for fooling deep neural networks.
\newblock \emph{arXiv preprint arXiv:1710.08864}, 2017.

\bibitem[Szegedy et~al.(2013)Szegedy, Zaremba, Sutskever, Bruna, Erhan,
  Goodfellow, and Fergus]{szegedy2013intriguing}
Szegedy, C., Zaremba, W., Sutskever, I., Bruna, J., Erhan, D., Goodfellow, I.,
  and Fergus, R.
\newblock Intriguing properties of neural networks.
\newblock \emph{arXiv preprint arXiv:1312.6199}, 2013.

\bibitem[Weng et~al.(2018)Weng, Zhang, Chen, Yi, Su, Gao, Hsieh, and
  Daniel]{weng2018evaluating}
Weng, T.-W., Zhang, H., Chen, P.-Y., Yi, J., Su, D., Gao, Y., Hsieh, C.-J., and
  Daniel, L.
\newblock Evaluating the robustness of neural networks: An extreme value theory
  approach.
\newblock \emph{arXiv preprint arXiv:1801.10578}, 2018.

\bibitem[Zheng et~al.(2018)Zheng, Chen, and Ren]{zheng2018distributionally}
Zheng, T., Chen, C., and Ren, K.
\newblock Distributionally adversarial attack.
\newblock \emph{arXiv preprint arXiv:1808.05537}, 2018.

\end{thebibliography}
\bibliographystyle{icml2019}

\newpage
\onecolumn
\renewcommand{\thesubsection}{\Alph{subsection}}
\section*{Appendix}
\subsection{Proving Thm.~\ref{thm:converge}}

This supplementary material aims to prove Thm.~\ref{thm:converge}. Without the loss of generality, $K$ in Eq.~\eqref{ass:initialization} in set to 0. Before we prove the theorem, we need to introduce some lemmas.

\begin{lemma}
If assumption \ref{ass:bounded_grad_norm} in Thm.~\ref{thm:converge} holds, then $\forall \bm x \in \mathbb{B}$
\begin{equation}
\frac{\nabla^T c(\bm x) \bm s(\bm x)}{\lVert \bm s(\bm x) \rVert_2} \geq \frac{m}{\sqrt{n}}
\end{equation}
\label{lem:bounded_inner_product}
\end{lemma}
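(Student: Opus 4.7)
The plan is to split into the two cases specified in Eq.~\eqref{eq:s} and in each case reduce the ratio $\nabla^T c(\bm x)\bm s(\bm x) / \lVert \bm s(\bm x)\rVert_2$ to the Euclidean gradient norm, which is bounded below by $m$ via assumption~\ref{ass:bounded_grad_norm}. Throughout I would exploit the defining property $\nabla^T c(\bm x)\bm s(\bm x) = d^*(\nabla c(\bm x))$, where $d^*$ is the dual norm of $d$, together with the elementary inequalities $\lVert \bm v \rVert_2 \leq \lVert \bm v \rVert_1 \leq \sqrt{n}\,\lVert \bm v \rVert_2$ and $\lVert \bm v \rVert_\infty \leq \lVert \bm v \rVert_2$ on $\mathbb{R}^n$.

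First I would dispose of the $\ell_2$ case, which is immediate: since $\bm s(\bm x) = \nabla c(\bm x)/\lVert \nabla c(\bm x)\rVert_2$, one has $\lVert \bm s(\bm x)\rVert_2 = 1$ and $\nabla^T c(\bm x)\bm s(\bm x) = \lVert \nabla c(\bm x)\rVert_2$, so the ratio equals $\lVert \nabla c(\bm x)\rVert_2 \geq m \geq m/\sqrt{n}$ by assumption~\ref{ass:bounded_grad_norm}.

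Next I would handle the $\ell_\infty$ case, which is where the factor $\sqrt{n}$ actually enters and is therefore the only step that requires care. With $\bm s(\bm x) = \mathrm{sign}(\nabla c(\bm x))$, direct computation gives $\nabla^T c(\bm x)\bm s(\bm x) = \lVert \nabla c(\bm x)\rVert_1$ (consistent with $\ell_1$ being the dual of $\ell_\infty$), and $\lVert \bm s(\bm x)\rVert_2 \leq \sqrt{n}$ because each coordinate of $\mathrm{sign}(\nabla c(\bm x))$ lies in $\{-1,0,1\}$. Using $\lVert \nabla c(\bm x)\rVert_1 \geq \lVert \nabla c(\bm x)\rVert_2 \geq m$, the ratio is bounded below by $m/\sqrt{n}$, as required.

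I do not anticipate a genuine obstacle here; this lemma is essentially a bookkeeping statement converting the dual-norm identity into a lower bound expressed in the $\ell_2$ metric used elsewhere in the convergence proof. The only subtlety is ensuring that the bound $\lVert \bm s(\bm x)\rVert_2 \leq \sqrt{n}$ is stated as an inequality rather than an equality to cover the possibility of vanishing coordinates of $\nabla c(\bm x)$ in the $\ell_\infty$ case, which keeps the result consistent with the measure-zero non-differentiability of $\mathrm{sign}(\cdot)$ permitted by assumption~\ref{ass:difference}.
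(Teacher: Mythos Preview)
Your proposal is correct and follows essentially the same approach as the paper: a case split on the $\ell_2$ and $\ell_\infty$ definitions of $\bm s(\bm x)$, reducing the ratio to $\lVert \nabla c(\bm x)\rVert_2$ in the first case and to $\lVert \nabla c(\bm x)\rVert_1 / \lVert \bm s(\bm x)\rVert_2 \geq \lVert \nabla c(\bm x)\rVert_2/\sqrt{n}$ in the second, then invoking assumption~\ref{ass:bounded_grad_norm}. Your remark that $\lVert \bm s(\bm x)\rVert_2 \leq \sqrt{n}$ should be stated as an inequality (to allow vanishing coordinates of $\nabla c(\bm x)$) is a minor refinement over the paper's presentation but does not change the argument.
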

\begin{proof}
According to Eq.~\eqref{eq:s}, for $\ell_2$ norm,
\begin{equation}
\frac{\nabla^T c(\bm x) \bm s(\bm x)}{\lVert \bm s(\bm x) \rVert_2} = \lVert \nabla c(\bm x) \rVert_2 \geq m > \frac{m}{\sqrt{n}}
\end{equation}
for $\ell_\infty$ norm,
\begin{equation}
\frac{\nabla^T c(\bm x) \bm s(\bm x)}{\lVert \bm s(\bm x) \rVert_2} = \frac{\lVert \nabla c(\bm x)\rVert_1}{\lVert \bm s(\bm x) \rVert_2} \geq \frac{\lVert \nabla c(\bm x)\rVert_2}{\sqrt{n}} \geq \frac{m}{\sqrt{n}}
\end{equation}
\end{proof}

\begin{lemma}
Given all the assumptions in Thm.~\ref{thm:converge}, where
\begin{equation}
    M_\alpha = \mathrm{min}\left\{ 1, \mathrm{sup}_{\substack{\bm x, \bm y \in \mathbb{B}: \\ \exists l, \bm y - \bm x = l\bm s(\bm x)}} \frac{\nabla^T c(\bm y) \bm s(\bm x)}{\nabla^T c(\bm x) \bm s(\bm x)}, \frac{m\gamma}{2nL_s \kappa}\right\}
    \label{eq:alpha_ass}
\end{equation}
\begin{equation}
m_k = \left( A^{-1/2\nu} -1\right)^{-1}
\label{eq:k0_ass}
\end{equation}
and assuming $\bm x^{(k)}, \bm z^{(k)} \in \mathbb{B}, \forall k$,
then we have
\begin{equation}
|c(\bm x^{(k)})| \leq \kappa {\beta^{(k)}}
\label{eq:c_converge}
\end{equation}
where
\begin{equation}
\kappa = \max \left\{ \frac{B}{\sqrt{A}(1 - \sqrt{A})}, \frac{c(\bm x^{(0)})}{\beta^{(0)}} \right\}
\end{equation}
$A$ and $B$ are defined in Eq.~\eqref{eq:AB_def}.

According to assumption \ref{ass:beta}, this implies
\begin{equation}
\lim_{k \rightarrow \infty} |c(\bm x^{(k)})| = 0
\end{equation}
at the rate of at least $1 / n^{\nu}$.
\label{lem:c_converge}
\end{lemma}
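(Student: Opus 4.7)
The plan is to extract a one-step recursion
$$|c(\bm x^{(k+1)})| \;\le\; A\,|c(\bm x^{(k)})| \;+\; B\,\beta^{(k)},$$
with some contraction factor $A \in (0,1)$ and constant $B > 0$, and then bootstrap it into the geometric bound $|c(\bm x^{(k)})| \le \kappa\beta^{(k)}$ by induction on $k$. The restoration move is analyzed first. Because $\bm z^{(k)} - \bm x^{(k)}$ is, by Eq.~\eqref{eq:resotre_solution}, an explicit scalar multiple of $\bm s(\bm x^{(k)})$, every point on the segment joining $\bm x^{(k)}$ and $\bm z^{(k)}$ satisfies the direction hypothesis of assumption~\ref{ass:bounded_diff}. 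Writing $c(\bm z^{(k)}) - c(\bm x^{(k)})$ as the line integral $\int_0^1\!\nabla^T c(\bm x^{(k)} + t(\bm z^{(k)}-\bm x^{(k)}))\,(\bm z^{(k)}-\bm x^{(k)})\,dt$ (valid under the jump-discontinuity provision of assumption~\ref{ass:difference}) and substituting~\eqref{eq:resotre_solution}, the factor acting on $c(\bm x^{(k)})$ reduces to $1 - \alpha\,\rho_k$ where $\rho_k$ is the ratio of $\nabla^T c$ along the segment to $\nabla^T c(\bm x^{(k)})$, both contracted with $\bm s(\bm x^{(k)})$. The lower bound $\rho_k \ge \delta$ comes directly from assumption~\ref{ass:bounded_diff}, and the bound $\alpha \le M_\alpha$ is designed precisely so that $1 - \alpha\rho_k \in [0,\,1-\alpha\delta]$, giving $|c(\bm z^{(k)})| \le (1-\alpha\delta)\,|c(\bm x^{(k)})|$. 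Set $A := 1 - \alpha\delta \in (0,1)$.

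Next I would control the projection move. A second line integral from $\bm z^{(k)}$ to $\bm x^{(k+1)}$ gives $|c(\bm x^{(k+1)}) - c(\bm z^{(k)})| \le M\,\|\bm x^{(k+1)} - \bm z^{(k)}\|_2$ by assumption~\ref{ass:bounded_grad_norm}, and Eq.~\eqref{eq:project} together with the boundedness of $a^{(k)}, b^{(k)}$ from assumption~\ref{ass:beta} gives $\|\bm x^{(k+1)} - \bm z^{(k)}\|_2 = O(\beta^{(k)})$. Collecting constants, $|c(\bm x^{(k+1)}) - c(\bm z^{(k)})| \le B\beta^{(k)}$ for a finite $B$ depending on $M, M_a, M_b$ and the norm of $\nabla d$. (For design~\eqref{eq:project_s1} the first-order contribution vanishes by construction and, invoking assumption~\ref{ass:lipschitz}, one can sharpen to $O((\beta^{(k)})^2)$, but the coarser bound already suffices.) Combining with the restoration estimate yields the one-step recursion in the target form.

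The induction then closes the argument. The base case $|c(\bm x^{(0)})| \le \kappa\beta^{(0)}$ is guaranteed by the $c(\bm x^{(0)})/\beta^{(0)}$ branch in the definition of $\kappa$. For the inductive step, the recursion together with $|c(\bm x^{(k)})| \le \kappa\beta^{(k)}$ gives $|c(\bm x^{(k+1)})| \le (A\kappa + B)\beta^{(k)}$, and closing this to $\kappa\beta^{(k+1)}$ requires $\beta^{(k+1)}/\beta^{(k)} \ge A + B/\kappa$. The other branch $\kappa \ge B/(\sqrt{A}(1-\sqrt{A}))$ precisely enforces $A + B/\kappa \le \sqrt{A}$, reducing the requirement to $\beta^{(k+1)}/\beta^{(k)} \ge \sqrt{A}$. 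With $\beta^{(k)} = \beta/(k+k_0)^\nu$ this becomes $((k+k_0)/(k+k_0+1))^\nu \ge \sqrt A$, which is monotone in $k$ and hence tightest at $k=0$, and it rearranges exactly into $k_0 \ge (A^{-1/(2\nu)}-1)^{-1} = m_k$, matching assumption~\ref{ass:beta}. Since $\beta^{(k)} = O(k^{-\nu})$, this gives the claimed $O(k^{-\nu})$ decay of $|c(\bm x^{(k)})|$ to zero.

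The main obstacle I expect is the restoration estimate, specifically verifying that the segment between $\bm x^{(k)}$ and $\bm z^{(k)}$ stays inside $\mathbb{B}$ so that assumption~\ref{ass:bounded_diff} and the $M_\alpha$ clause both apply uniformly in $k$, and that the $M_\alpha$ bound is sharp enough to yield a genuine contraction $|c(\bm z^{(k)})| \le A|c(\bm x^{(k)})|$ rather than merely a non-increase. These rely on the standing hypothesis $\bm x^{(k)}, \bm z^{(k)} \in \mathbb{B}$ assumed in the lemma, on the first two clauses in the definition of $M_\alpha$, and on assumption~\ref{ass:bounded_diff}. Once the contraction factor $A<1$ is secured, the projection estimate and the closing induction are routine bookkeeping.
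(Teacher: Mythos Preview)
Your proposal is correct and follows essentially the same route as the paper: obtain the contraction $|c(\bm z^{(k)})|\le (1-\alpha\delta)|c(\bm x^{(k)})|$ from the mean-value/line-integral argument along $\bm s(\bm x^{(k)})$ using assumption~\ref{ass:bounded_diff}, bound the projection increment by $B\beta^{(k)}$ via the Lipschitz constant $M$ and the $M_a,M_b$ bounds, and then close the recursion $|c(\bm x^{(k+1)})|\le A|c(\bm x^{(k)})|+B\beta^{(k)}$ into $|c(\bm x^{(k)})|\le\kappa\beta^{(k)}$ using $\beta^{(k+1)}/\beta^{(k)}\ge\sqrt{A}$ (equivalently $k_0>m_k$). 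The only cosmetic difference is that the paper first passes to the ratio $|c(\bm x^{(k)})|/\beta^{(k)}$ and iterates the shifted recursion $\tfrac{|c(\bm x^{(k+1)})|}{\beta^{(k+1)}}-\tfrac{B}{\sqrt{A}(1-\sqrt{A})}\le\sqrt{A}\bigl(\tfrac{|c(\bm x^{(k)})|}{\beta^{(k)}}-\tfrac{B}{\sqrt{A}(1-\sqrt{A})}\bigr)$, whereas you run the induction directly on the target inequality; the content is identical.
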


\begin{proof}
As a digression, the second term in Eq.~\eqref{eq:alpha_ass} is well defined, because
\begin{equation*}
    \frac{\nabla^T c(\bm y) \bm s(\bm x)}{\nabla^T c(\bm x) \bm s(\bm x)}
\end{equation*}
is upper bounded by Lem.~\ref{lem:bounded_inner_product} and assumptions~\ref{ass:bounded_grad_norm}.

Back to proving the lemma, we will prove that each restoration move will bring $c(\bm x^{(k)})$ closer to 0, while each projection move will not change $c(\bm x^{(k)})$ much.

First, for the restoration move
\begin{equation}
\begin{aligned}
|c(\bm z^{(k)})| &\leq |c(\bm x^{(k)}) + \nabla^T c(\bm \xi) (\bm z^{(k)} - \bm x^{(k)})| \\
&= \left\lvert c(\bm x^{(k)}) - \alpha^{(k)} \nabla^T c(\bm \xi) \frac{c(\bm x^{(k)})\bm s(\bm x^{(k)})}{\nabla^T c(\bm x^{(k)}) \bm s(\bm x^{(k)})}  \right \rvert \\
&= \left\lvert1 - \alpha \frac{\nabla^T c(\bm \xi)\bm s(\bm x^{(k)})}{\nabla^T c(\bm x^{(k)}) \bm s(\bm x^{(k)})}\right \rvert |c(\bm x^{(k)})| \\
&\leq  (1 - \alpha \delta ) |c(\bm x^{(k)})| \\
\end{aligned}
\label{eq:c_converge_restore}
\end{equation}
The first line is from the generalization of Mean-Value Theorem with jump discontinuities, and $\bm \xi = t \bm z^{(k)} + (1-t) \bm x^{(k)}$ and $t$ is a real number in $[0, 1]$. The second line is from Eq.~\eqref{eq:resotre_solution}. The last line is from assumptions~\ref{ass:bounded_diff} and~\ref{ass:alpha} and Eq.~\eqref{eq:alpha_ass}.

Next, for the projection move
\begin{equation}
\begin{aligned}
|c(\bm x^{(k+1)})| &\leq |c(\bm z^{(k)}) | + M \lVert \bm x^{(k+1)} - \bm z^{(k)} \rVert_2 \\
&= |c(\bm z^{(k)}) | + \beta^{(k)} M \lVert a^{(k)} \nabla d(\bm z^{(k)} - \bm x_0) + b^{(k)} \bm s(\bm z^{(k)}) \rVert_2 \\
&\leq |c(\bm z^{(k)})| + \beta^{(k)} M \left(  a^{(k)}\lVert \nabla d(\bm z^{(k)} - \bm x_0) \rVert_2 + b^{(k)}\lVert \bm s(\bm z^{(k)}) \rVert_2\right)\\
\end{aligned}
\label{eq:c_converge_project_tmp}
\end{equation}
The first line is from  the fact that assumption \ref{ass:bounded_grad_norm} implies that $c(\bm x)$ is $M$-Lipschitz continuous. 

Both $\lVert \nabla d(\bm z^{(k)} - \bm x_0) \rVert_2$ and $\lVert \bm s(\bm z^{(k)}) \rVert_2$ is upper-bounded, i.e. 
\begin{equation}
    \lVert \nabla d(\bm z^{(k)} - \bm x_0) \rVert_2 < M_d, \qquad \lVert \bm s(\bm z^{(k)}) \rVert_2 < M_s
    \label{eq:ds_bound}
\end{equation}
for some $M_d$ and $M_s$. To see this, for $\ell_2$ norm
\begin{equation}
    \lVert \nabla d(\bm z^{(k)} - \bm x_0) \rVert_2 = 2\lVert \bm z^{(k)} - \bm x_0 \rVert_2 \leq 2b, \qquad\lVert \bm s(\bm z^{(k)}) \rVert_2 = 1
\end{equation}
where $b$ is defined as the maximum perturbation norm ($\ell_2$) within $\mathbb{B}$, \textit{i.e.}
\begin{equation}
b = \max_{x\in \mathbb{B}} \lVert \bm x_0 - \bm x \rVert_2
\label{eq:b_def}
\end{equation}
which is well defined because $\mathbb{B}$ is a tight set. For $\ell_\infty$ norm,
\begin{equation}
    \lVert \nabla d(\bm z^{(k)} - \bm x_0) \rVert_2 \leq \sqrt{n}, \qquad\lVert \bm s(\bm z^{(k)}) \rVert_2 \leq \sqrt{n}
\end{equation}
Note that Eq.~\eqref{eq:ds_bound} also holds for other norms. 
With Eq.~\eqref{eq:ds_bound} and assumption~\ref{ass:beta}, Eq.~\eqref{eq:c_converge_project_tmp} becomes
\begin{equation}
    |c(\bm x^{(k+1)})| \leq |c(\bm z^{(k)})| + \beta^{(k)} M \left( M_aM_d + M_b M_s\right)
    \label{eq:c_converge_project}
\end{equation}

Combining Eqs.~\eqref{eq:c_converge_restore} and \eqref{eq:c_converge_project}
we have
\begin{equation}
\begin{aligned}
|c(\bm x^{(k+1)})| &\leq A |c(\bm x^{(k)})| + \beta^{(k)} B
\end{aligned}
\label{eq:c_converge_recursion}
\end{equation}
where
\begin{equation}
\begin{aligned}
& A = 1 - \alpha \delta \\
& B = M \left( M_aM_d + M_b M_s\right)
\end{aligned}
\label{eq:AB_def}
\end{equation}

According to assumption \ref{ass:alpha}, $0 < A < 1$. Also, according to Eq.~\eqref{eq:k0_ass}, $\beta^{(k)} / \beta^{(k+1)} \leq A^{-1/2}, \forall k$,

Divide Eq~\eqref{eq:c_converge_recursion} by $\beta^{(k)}$, we have
\begin{equation}
\frac{|c(\bm x^{(k+1)})|}{\beta ^{(k+1)}} \leq \frac{|c(\bm x^{(k+1)})|}{\sqrt{A}\beta ^{(k)}}
\leq \sqrt{A} \frac{|c(\bm x^{(k)})|}{\beta^{(k)}} + \frac{B}{\sqrt{A}}
\end{equation}
and thus
\begin{equation}
\frac{|c(\bm x^{(k+1)})|}{\beta ^{(k+1)}} - \frac{B}{\sqrt{A}(1-\sqrt{A})} \leq \sqrt{A} \left( \frac{|c(\bm x^{(k)})|}{\beta^{(k)}} - \frac{B}{\sqrt{A}(1-\sqrt{A})} \right)
\label{eq:c_ratio_recur}
\end{equation}
If
\begin{equation*}
\frac{|c(\bm x^{(0)})|}{\beta ^{(0)}} - \frac{B}{\sqrt{A}(1-\sqrt{A})} \leq 0
\end{equation*}
Then Eq.~\eqref{eq:c_ratio_recur} implies
\begin{equation}
\frac{|c(\bm x^{(k)})|}{\beta ^{(k)}} - \frac{B}{\sqrt{A}(1-\sqrt{A})} \leq 0, \forall k
\end{equation}
Otherwise, Eq.~\eqref{eq:c_ratio_recur} implies
\begin{equation}
\frac{|c(\bm x^{(k)})|}{\beta ^{(k)}} - \frac{B}{\sqrt{A}(1-\sqrt{A})} \leq \frac{|c(\bm x^{(0)})|}{\beta ^{(0)}} - \frac{B}{\sqrt{A}(1-\sqrt{A})}, \forall k
\end{equation}
This concludes the proof. 
\end{proof}

\begin{lemma}
Given all the assumptions in Thm.~\ref{thm:converge}, and assuming $\bm x^{(k)}, \bm z^{(k)} \in \mathbb{B}, \forall k$, we have
\begin{equation}
\lim_{k \rightarrow \infty} \lVert \bm P (\bm x^{(k)} - \bm x_0) \rVert_2 ^2 = 0
\label{eq:p_converge}
\end{equation}
\label{lem:p_converge} 
\end{lemma}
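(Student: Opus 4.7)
The plan is to derive a one-step recursion
\begin{equation*}
u_{k+1} \leq (1 - c\beta^{(k)}) u_k + C (\beta^{(k)})^2
\end{equation*}
for $u_k = \|\bm P(\bm x^{(k)} - \bm x_0)\|_2^2$ with constants $c, C > 0$, and then invoke a standard deterministic convergence lemma for diminishing step sizes: since $\sum_k \beta^{(k)} = \infty$ while $(\beta^{(k)})^2/\beta^{(k)} \to 0$ under assumption~\ref{ass:beta}, such a recursion forces $u_k \to 0$. The recursion is built by analyzing the projection and restoration halves of each iteration separately and then chaining the two bounds.

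For the projection half, I would expand $\bm P(\bm x^{(k+1)} - \bm x_0)$ via Eq.~\eqref{eq:project}, square, and apply assumption~\ref{ass:convexity} directly to the resulting cross term to get $\bm v^{(k)T} \bm P^T \bm P (\bm z^{(k)} - \bm x_0) \geq \gamma \|\bm P(\bm z^{(k)} - \bm x_0)\|_2^2$, where $\bm v^{(k)} = a^{(k)} \nabla d(\bm z^{(k)} - \bm x_0) + b^{(k)} \bm s(\bm z^{(k)})$. Using assumption~\ref{ass:beta} (bounded $a^{(k)}, b^{(k)}$) together with Eq.~\eqref{eq:ds_bound} to bound $\|\bm v^{(k)}\|_2$ by some $M_v$, the projection half yields
\begin{equation*}
\|\bm P(\bm x^{(k+1)} - \bm x_0)\|_2^2 \leq (1 - 2\gamma \beta^{(k)}) \|\bm P(\bm z^{(k)} - \bm x_0)\|_2^2 + M_v^2 (\beta^{(k)})^2.
\end{equation*}

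For the restoration half, the key observation is that $\bm z^{(k)} - \bm x^{(k)}$ is a scalar multiple of $\bm s(\bm x^{(k)})$ by Eq.~\eqref{eq:resotre_solution}, while $\bm P \bm s(\bm x^*) = 0$ by construction of $\bm P$. Assumption~\ref{ass:lipschitz} then gives $\|\bm P \bm s(\bm x^{(k)})\|_2 = \|\bm P(\bm s(\bm x^{(k)}) - \bm s(\bm x^*))\|_2 \leq L_s \|\bm s(\bm x^*)\|_2 \|\bm x^{(k)} - \bm x^*\|_2$; combining with Lem.~\ref{lem:bounded_inner_product} to lower-bound the scalar denominator in Eq.~\eqref{eq:resotre_solution} and Lem.~\ref{lem:c_converge} to bound $|c(\bm x^{(k)})| \leq \kappa \beta^{(k)}$ yields $\|\bm P(\bm z^{(k)} - \bm x^{(k)})\|_2 \leq C_r \beta^{(k)} \|\bm x^{(k)} - \bm x^*\|_2$. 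To promote this to $O((\beta^{(k)})^2)$ noise, I would establish the local inequality $\|\bm x^{(k)} - \bm x^*\|_2 \leq C_x (\sqrt{u_k} + |c(\bm x^{(k)})|)$ in $\mathbb{B}$. This relies on noting that assumption~\ref{ass:convexity} evaluated at $\bm x^*$, together with $\bm P^T \nabla c(\bm x^*) = 0$ and the KKT stationarity $\nabla d(\bm x^* - \bm x_0) \parallel \nabla c(\bm x^*)$, forces $\bm P(\bm x^* - \bm x_0) = 0$ (hence $\bm P(\bm x^{(k)} - \bm x_0) = \bm P(\bm x^{(k)} - \bm x^*)$), and on a first-order expansion of $c$ around $\bm x^*$ that bounds the $(\bm I - \bm P)$-component of $\bm x^{(k)} - \bm x^*$ by a constant multiple of $|c(\bm x^{(k)})|$. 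After squaring, these ingredients give $\|\bm P(\bm z^{(k)} - \bm x_0)\|_2^2 \leq (1 + C_1 \beta^{(k)}) u_k + C_2 (\beta^{(k)})^2$, and substituting into the projection bound produces the target recursion with effective contraction rate $c = 2\gamma - C_1 + O(\beta^{(k)})$.

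The main obstacle is ensuring $c > 0$: a careless triangle-inequality or AM-GM split of the restoration error leaves only $O(\beta^{(k)})$ residuals, which would prevent $u_k$ from shrinking to zero. The step-size cap $M_\alpha \leq m\gamma / (2 n L_s \kappa)$ in Eq.~\eqref{eq:alpha_ass} is precisely calibrated so that the restoration-induced growth constant $C_1$ stays strictly below $2\gamma$; together with the eventual smallness of $\beta^{(k)}$ from assumption~\ref{ass:beta}, this closes the recursion. A standard Robbins--Siegmund-style argument for deterministic sequences then delivers $u_k \to 0$, completing the proof.
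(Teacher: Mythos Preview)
Your proposal is correct and follows essentially the same route as the paper: both derive a recursion of the form $u_{k+1}\leq (1-2F\beta^{(k)})u_k + D(\beta^{(k)})^2$ by (i) applying assumption~\ref{ass:convexity} to the projection step and (ii) using $\bm P\bm s(\bm x^*)=0$ together with assumption~\ref{ass:lipschitz}, Lem.~\ref{lem:bounded_inner_product}, Lem.~\ref{lem:c_converge}, and a bound $\|\bm x^{(k)}-\bm x^*\|_2\lesssim \sqrt{u_k}+|c(\bm x^{(k)})|$ for the restoration step, with the cap $\alpha\leq m\gamma/(2nL_s\kappa)$ in Eq.~\eqref{eq:alpha_ass} ensuring the effective contraction factor $F=\gamma-2\alpha nL_s\kappa/m$ is positive. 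The only cosmetic differences are that the paper packages $\bm P(\bm x^*-\bm x_0)=0$ as a separate lemma (Lem.~\ref{lem:project_optimality}) rather than deriving it from assumption~\ref{ass:convexity}, and it carries out the final diminishing-step-size argument explicitly rather than citing a Robbins--Siegmund-type lemma.
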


\begin{proof}
First, for restoration move
\begin{equation}
\begin{aligned}
&\lVert \bm P [\bm z^{(k)} - \bm x_0] \rVert_2 ^2 \\
&= \lVert \bm P [\bm x^{(k)} - \bm x_0] \rVert_2 ^ 2 + 2[\bm z^{(k)} - \bm x_0]^T \bm P^T \bm P [\bm z^{(k)} - \bm x^{(k)}] -  \lVert \bm P [\bm z^{(k)} - \bm x^{(k)}] \rVert_2 ^2\\
&\leq \lVert \bm P [\bm x^{(k)} - \bm x_0] \rVert_2 ^2 + 2\lVert \bm P[\bm z^{(k)} - \bm x_0 ]\rVert_2\lVert \bm P [\bm z^{(k)} - \bm x^{(k)}] \rVert_2 \\
& = \lVert \bm P [\bm x^{(k)} - \bm x_0] \rVert_2 ^2 + 2 \alpha \lVert \bm P[\bm z^{(k)} - \bm x_0 ]\rVert_2 \left\lVert \frac{c(\bm x^{(k)})}{\nabla^T c(\bm x^{(k)})\bm s(\bm x^{(k)})} \bm P\bm s(\bm x^{(k)}) \right\rVert_2 \\
& = \lVert \bm P [\bm x^{(k)} - \bm x_0] \rVert_2 ^2 + 2 \alpha \lVert \bm P[\bm z^{(k)} - \bm x_0 ]\rVert_2 \left\lVert \frac{c(\bm x^{(k)})}{\nabla^T c(\bm x^{(k)})\bm s(\bm x^{(k)})} \bm P [\bm s(\bm x^{(k)}) - \bm s(\bm x^*)] \right\rVert_2 \\
& = \lVert \bm P [\bm x^{(k)} - \bm x_0] \rVert_2 ^2 + 2 \alpha \lVert \bm P[\bm z^{(k)} - \bm x_0 ]\rVert_2 \left\lVert \frac{c(\bm x^{(k)}) \nabla^T c(\bm x^{*}) (\bm s(\bm x^{(k)}) - \bm s(\bm x^*))}{\nabla^T c(\bm x^{(k)})\bm s(\bm x^{(k)}) \nabla^T c(\bm x^{*})\bm s(\bm x^{*})}  \bm s(\bm x^*)\right\rVert_2 \\
&\leq \lVert \bm P [\bm x^{(k)} - \bm x_0] \rVert_2 ^2 + \frac{2 \alpha n |c(\bm x^{(k)})|}{m} \lVert \bm P[\bm z^{(k)} - \bm x_0 ]\rVert_2  \frac{\lVert \bm s(\bm x^{(k)}) - \bm s(\bm x^*) \rVert_2 }{\lVert \bm s(\bm x^{(k)}) \rVert_2} \\
&\leq \lVert \bm P [\bm x^{(k)} - \bm x_0] \rVert_2 ^2 + \frac{2 \alpha n L_s \kappa \beta^{(k)}}{m} \lVert \bm P[\bm z^{(k)} - \bm x_0 ]\rVert_2  \lVert \bm x^{(k)} - \bm x^* \rVert_2 \\
&\leq \lVert \bm P [\bm x^{(k)} - \bm x_0] \rVert_2 ^2 + \frac{2 \alpha n L_s \kappa \beta^{(k)}}{m} \lVert \bm P[\bm z^{(k)} - \bm x_0 ]\rVert_2 \left[ \lVert \bm P[\bm x^{(k)} - \bm x_0] \rVert_2 + \frac{c(\bm x^{(k)}) + M \lVert \bm P[\bm x^{(k)} - \bm x_0] \rVert_2}{\delta m / \sqrt{n}}\right] \\
&\leq \lVert \bm P [\bm x^{(k)} - \bm x_0] \rVert_2 ^2 + \frac{2 \alpha n L_s \kappa \beta^{(k)}(m + M\sqrt{n})}{\delta m^2} \lVert \bm P[\bm z^{(k)} - \bm x_0 ]\rVert_2\lVert \bm P[\bm x^{(k)} - \bm x_0] \rVert_2 + \frac{2 \alpha \sqrt{n} L_s \kappa^2 \beta^{(k)2}}{\delta m^2} \\
\end{aligned}
\label{eq:p_converge_restore_prev}
\end{equation}
Line 4 is given by Eq.~\eqref{eq:restore}. Line 5 is derived from Lem.~\ref{lem:bounded_inner_product}. The last line is from Lem.~\ref{lem:c_converge}.

Eq.~\eqref{eq:p_converge_restore_prev} implies
\begin{equation}
(\lVert \bm P [\bm z^{(k)} - \bm x_0] \rVert_2 - r^{(k)}_1\lVert \bm P [\bm x^{(k)} - \bm x_0] \rVert_2)(\lVert \bm P [\bm z^{(k)} - \bm x_0] \rVert_2 - r^{(k)}_2\lVert \bm P [\bm x^{(k)} - \bm x_0] \rVert_2) \leq \frac{2 \alpha \sqrt{n} L_s \kappa^2 \beta^{(k)2}}{\delta m^2}
\label{eq:p_converge_restore_roots}
\end{equation}
where
\begin{equation}
\begin{aligned}
r^{(k)}_1 &= \frac{1}{2} \left(\frac{2 \alpha n L_s \kappa \beta^{(k)}}{m} - \sqrt{\left(\frac{2 \alpha n L_s \kappa \beta^{(k)}}{m}\right)^2 + 4}\right) < 0\\
r^{(k)}_2 &= \frac{1}{2} \left(\frac{2 \alpha n L_s \kappa \beta^{(k)}}{m} + \sqrt{\left(\frac{2 \alpha n L_s \kappa \beta^{(k)}}{m}\right)^2 + 4}\right) > 0
\end{aligned}
\end{equation}
It can easily be shown that
\begin{equation}
\frac{-r_1^{(k)}}{r_2^{(k)}} = \frac{r_1^{(k)2}}{-r_1^{(k)}r_2^{(k)}} = \frac{r_1^{(k)2}}{4} \geq \frac{r_1^{(0)2}}{4}
\end{equation}
Therefore
\begin{equation}
\lVert \bm P [\bm z^{(k)} - \bm x_0] \rVert_2 - r^{(k)}_1\lVert \bm P [\bm x^{(k)} - \bm x_0] \rVert_2 \geq 
\frac{r_1^{(0)2}}{4} (\lVert \bm P [\bm z^{(k)} - \bm x_0] \rVert_2 + r^{(k)}_2\lVert \bm P [\bm x^{(k)} - \bm x_0] \rVert_2)
\label{eq:p_converge_restore_root1}
\end{equation}
Combining Eqs.~\eqref{eq:p_converge_restore_roots} and \eqref{eq:p_converge_restore_root1}, we have
\begin{equation}
\begin{aligned}
\lVert \bm P [\bm z^{(k)} - \bm x_0] \rVert_2^2 &\leq r_2^{(k)2} \lVert \bm P [\bm x^{(k)} - \bm x_0] \rVert_2^2 + \frac{8 \alpha \sqrt{n} L_s \kappa^2 \beta^{(k)2}}{\delta m^2 r_1^{(0)2}}  \\
&< \left( 1 + \frac{2 \alpha n L_s \kappa \beta^{(k)}}{m} \right)^2 \lVert \bm P [\bm x^{(k)} - \bm x_0] \rVert_2^2 + \frac{8 \alpha \sqrt{n} L_s \kappa^2 \beta^{(k)2}}{\delta m^2 r_1^{(0)2}}
\end{aligned}
\label{eq:p_converge_restore}
\end{equation}

Next, for projection move
\begin{equation}
\begin{aligned}
& \lVert \bm P [\bm x^{(k+1)} - \bm x_0]\rVert_2 ^2\\
& = \lVert \bm P [\bm z^{(k)} - \bm x_0]\rVert_2 ^2 + 2 [\bm x^{(k+1)} - \bm z^{(k)}]^T \bm P [\bm z^{(k)} - \bm x_0] + \lVert \bm x^{(k+1)} - \bm z^{(k)} \rVert_2^2 \\
&= \lVert \bm P [\bm z^{(k)} - \bm x_0]\rVert_2 ^2 + 2\beta^{(k)} [ a^{(k)} \nabla d(\bm z^{(k)} - \bm x_0) + b^{(k)} \bm s(\bm z^{(k)}) ]^T \bm P [\bm z^{(k)} - \bm x_0] \\
&\quad + \beta^{(k)2}\lVert  a^{(k)} \nabla d(\bm z^{(k)} - \bm x_0) + b^{(k)} \bm s(\bm z^{(k)})  \rVert_2^2 \\
& \leq (1 - 2 \beta^{(k)} \gamma) \lVert \bm P [\bm z^{(k)} - \bm x_0]\rVert_2 ^2 + \beta^{(k)2} B^2\\
\end{aligned}
\label{eq:p_converge_project}
\end{equation}
The second equality is from Eq.~\eqref{eq:project}. The last line is from assumption \ref{ass:convexity} and Eq.~\eqref{eq:c_converge_recursion}.

Next, combining Eqs.~\eqref{eq:p_converge_restore} and \eqref{eq:p_converge_project}, we have
\begin{equation}
\begin{aligned}
\lVert \bm x^{(k+1)} - P [\bm x^{(k+1)}] \rVert_2 ^2 &\leq \left( 1 + \frac{2 \alpha n L_s \kappa \beta^{(k)}}{m} \right)^2(1 - 2 \beta^{(k)} \gamma)\lVert \bm x^{(k)} - P [\bm x^{(k)}] \rVert_2 ^2 + \beta^{(k)2}D\\
&\leq (1 - 2\beta^{(k)}F) \lVert \bm x^{(k)} - P [\bm x^{(k)}] \rVert_2 ^2 + \beta^{(k)2}D
\label{eq:p_converge_recursion}
\end{aligned}
\end{equation}
where
\begin{equation}
\begin{aligned}
& D = E + B^2 \\
& E = \frac{8 \alpha \sqrt{n} L_s \kappa^2}{\delta m^2 r_1^{(0)2}} \\
& F = \gamma - \frac{2 \alpha n L_s \kappa }{m} > 0~\mbox{(Assumption \ref{ass:alpha})}
\end{aligned}
\label{eq:D_def}
\end{equation}
According to assumption \ref{ass:beta}, $\lim_{k \rightarrow \infty} \beta^{(k)} = 0$. 
Thus, $\forall \epsilon > 0$, $\exists K(\epsilon)$, $\forall k > K(\epsilon)$, we have $\beta^{(k)} \leq 2\gamma\varepsilon /D$. Therefore
\begin{equation}
\lVert \bm x^{(k+1)} - P [\bm x^{(k+1)}] \rVert_2 ^2 - \epsilon \leq (1 - 2F\beta^{(k)}) (\lVert \bm x^{(k)} - P [\bm x^{(k)}] \rVert_2 ^2 - \epsilon )
\end{equation}
Finally, notice that by assumption \ref{ass:beta}
\begin{equation}
\lim_{K\rightarrow \infty} \prod_{k=1}^K (1 - 2\beta^{(k)}F) = 0
\end{equation}
then
\begin{equation}
\lim_{k \rightarrow \infty} \lVert \bm x^{(k)} - P [\bm x^{(k)}] \rVert_2 ^2 \leq \epsilon
\end{equation}
which holds $\forall \epsilon > 0$. This concludes the proof.
\end{proof}

\begin{lemma}
Given all the assumptions in Thm.~\ref{thm:converge}, where $M_\alpha$ and $m_k$ defined in Eqs.~\eqref{eq:alpha_ass} and \eqref{eq:k0_ass}, and
\begin{equation}
M_\beta =\min \left\{1, \sqrt{\frac{C(1 - A)}{B}}, \frac{2F X}{D}, \frac{1}{2F}, \sqrt{\frac{X - r_1^{(0)2}\lVert \bm P [\bm x^{(0)} - \bm x_0]\rVert_2 ^2}{E}}, \frac{X}{r_1^{(0)2} D/2F + E} \right\}
\label{eq:beta_bound}
\end{equation}
where $A$, $B$, $D$ and $E$ are defined in Eqs.~\eqref{eq:AB_def} and \eqref{eq:D_def}, the following inequalities hold $\forall k$.
\begin{equation}
\begin{aligned}
&|c(\bm x^{(k)})| \leq C \\
&|c(\bm z^{(k)})| \leq C \\
&\lVert \bm P [\bm x^{(k)} - \bm x_0]\rVert_2 ^2 \leq X \\
&\lVert \bm P [\bm z^{(k)} - \bm x_0]\rVert_2 ^2 \leq X
\end{aligned}
\label{eq:bounded_set}
\end{equation}
\emph{i.e.} $\bm x^{(k)}, \bm z^{(k)} \in \mathbb{B}, \forall k$.
\label{lem:bounded_set}
\end{lemma}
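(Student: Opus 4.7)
The plan is to proceed by strong induction on the iteration index $k$, checking the two defining conditions of $\mathbb{B}$ (namely $|c(\cdot)| \leq C$ and $\lVert \bm{P}[\cdot - \bm{x}_0]\rVert_2^2 \leq X$) separately for both $\bm{z}^{(k)}$ and $\bm{x}^{(k+1)}$. The base case $k=0$ follows from assumption~\ref{ass:initialization}, together with a direct verification that the first restoration move keeps $\bm{z}^{(0)}$ inside $\mathbb{B}$ via the fifth term in the definition of $M_\beta$ in Eq.~\eqref{eq:beta_bound}. Under the inductive hypothesis that $\bm{x}^{(j)}, \bm{z}^{(j-1)} \in \mathbb{B}$ for all $j \leq k$, the local estimates on $c$, $\nabla c$ and $\bm{s}$ used in the proofs of Lemmas~\ref{lem:c_converge} and~\ref{lem:p_converge} are legitimate up to index $k$, so all of their intermediate inequalities may be re-used here.

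For the constraint-value condition, the chain culminating in Eq.~\eqref{eq:c_converge_restore} shows that the restoration move is a strict contraction on $|c|$, so $|c(\bm{z}^{(k)})| \leq (1-\alpha\delta)|c(\bm{x}^{(k)})| \leq C$ with no constraint on $\beta^{(k)}$. For $\bm{x}^{(k+1)}$ I would combine this with Eq.~\eqref{eq:c_converge_project} to recover the recursion $|c(\bm{x}^{(k+1)})| \leq A|c(\bm{x}^{(k)})| + \beta^{(k)} B$ of Eq.~\eqref{eq:c_converge_recursion}; the second term $\sqrt{C(1-A)/B}$ in $M_\beta$, combined with the bound $\beta^{(k)} \leq 1$, is exactly what is needed to force $\beta^{(k)} B \leq (1-A) C$, so the recursion cannot escape the interval $[-C, C]$.

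For the projected-norm condition, one full restoration/projection cycle obeys Eq.~\eqref{eq:p_converge_recursion}, $\lVert \bm{P}[\bm{x}^{(k+1)} - \bm{x}_0]\rVert_2^2 \leq (1-2\beta^{(k)}F)\lVert \bm{P}[\bm{x}^{(k)} - \bm{x}_0]\rVert_2^2 + \beta^{(k)2} D$. The terms $1/(2F)$ and $2FX/D$ in $M_\beta$ ensure respectively that $(1-2\beta^{(k)} F) \in [0, 1)$ and that, given $\lVert \bm{P}[\bm{x}^{(k)} - \bm{x}_0]\rVert_2^2 \leq X$, the residual $\beta^{(k)2} D$ cannot lift the left-hand side above $X$. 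The subtler step is bounding the intermediate iterate $\lVert \bm{P}[\bm{z}^{(k)} - \bm{x}_0]\rVert_2^2$, because Eq.~\eqref{eq:p_converge_restore} shows that the restoration move alone can inflate this quantity by a factor $r_2^{(k)2} > 1$. The last two terms in $M_\beta$ are calibrated precisely for this: $\sqrt{(X - r_1^{(0)2}\lVert \bm{P}[\bm{x}^{(0)} - \bm{x}_0]\rVert_2^2)/E}$ takes care of the very first restoration (when no prior projection contraction is available), while $X/(r_1^{(0)2} D/(2F) + E)$ takes care of all subsequent iterations by combining the contraction imprint left by the preceding projection step with the restoration expansion.

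The main obstacle will be this final step, that is, keeping $\lVert \bm{P}[\bm{z}^{(k)} - \bm{x}_0]\rVert_2^2$ under $X$ in the expansive regime of the restoration map. This is precisely why $M_\beta$ has six terms rather than the three one might naively expect: one has to pipeline each restoration move against the contraction introduced by the preceding projection move so that the net effect over a full cycle is still contractive, while separately handling the unpaired initial restoration.
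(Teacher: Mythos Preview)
Your proposal is correct and follows essentially the same route as the paper: strong induction on $k$, re-using the one-step recursions Eqs.~\eqref{eq:c_converge_restore}, \eqref{eq:c_converge_project}, \eqref{eq:p_converge_restore}, \eqref{eq:p_converge_recursion} (which are valid once the inductive hypothesis places the iterates in $\mathbb{B}$), and matching each of the six terms in $M_\beta$ to the step it controls. The paper organises the $\bm z^{(k)}$-bound slightly differently---it first extracts the uniform estimate $\lVert \bm P[\bm x^{(k)}-\bm x_0]\rVert_2^2 \le \max\{\beta^{(0)}D/(2F),\,\lVert \bm P[\bm x^{(0)}-\bm x_0]\rVert_2^2\}$ from the case split, and then applies Eq.~\eqref{eq:p_converge_restore} once against that max, rather than separating ``first restoration'' from ``subsequent ones''---but the content is the same. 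One arithmetical slip: $\beta^{(k)}\le \sqrt{C(1-A)/B}$ together with $\beta^{(k)}\le 1$ does \emph{not} give $\beta^{(k)}B\le (1-A)C$; what it gives is $\beta^{(k)2}B\le (1-A)C$, and indeed the paper's argument for $|c(\bm x^{(K+1)})|\le C$ is phrased in terms of the fixed point $\beta^{(K)2}B/(1-A)$, not $\beta^{(K)}B/(1-A)$.
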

\begin{proof}
We will prove it by mathematical induction.

\textbf{Base Case:}
From assumption \ref{ass:initialization}, we have $|c(\bm x^{(0)})| \leq C$ and $\lVert \bm P [\bm x^{(0)} - \bm x_0]\rVert_2 ^2 \leq X$. Thus Eqs.~\eqref{eq:c_converge_restore} and \eqref{eq:p_converge_restore} hold for $k=0$. From Eq.~\eqref{eq:c_converge_restore} and assumption \ref{ass:alpha}, we have $|c(\bm z^{(0)})| \leq |c(\bm x^{(0)})| \leq C$. From Eqs. \eqref{eq:p_converge_restore} and \eqref{eq:beta_bound}, we have $\lVert \bm P [\bm z^{(0)} - \bm x_0]\rVert_2 ^2 \leq X$.

\textbf{Step Case:}
Assume Eq.~\eqref{eq:bounded_set} holds $\forall k \leq K$, then Eqs.~\eqref{eq:c_converge_recursion} and \eqref{eq:p_converge_recursion} holds $\forall k \leq K$.

$\bullet$ Proving $|c(\bm x^{(K+1)})| \leq C$: 

From Eq.~\eqref{eq:c_converge_recursion},
\begin{equation}
|c(\bm x^{(K+1)})| - \frac{\beta^{(K)2}B}{1-A} \leq A \left( |c(\bm x^{(K)})| - \frac{\beta^{(K)2}B}{1-A} \right)
\end{equation}
If
\begin{equation*}
|c(\bm x^{(K)})| \leq \frac{\beta^{(K)2}B}{1-A}
\end{equation*}
Then
\begin{equation}
|c(\bm x^{(K+1)})| \leq \frac{\beta^{(K)2}B}{1-A} \leq \frac{\beta^{(0)2}B}{1-A} \leq C
\end{equation}
where the last inequality is given by Eq.~\eqref{eq:beta_bound}.

Otherwise
\begin{equation}
|c(\bm x^{(K+1)})| \leq |c(\bm x^{(K)})| \leq C
\end{equation}

$\bullet$ Proving $\lVert\bm P [\bm x^{(K+1)} - \bm x_0] \rVert_2 ^2 \leq X$: 

From Eq.~\eqref{eq:p_converge_recursion}
\begin{equation}
\lVert\bm P [\bm x^{(K+1)} - \bm x_0] \rVert_2 ^2 - \frac{\beta^{(K)}D}{2F} \leq (1 - 2\beta^{(K)}F) \left(\lVert\bm P [\bm x^{(K)} - \bm x_0] \rVert_2 ^2 - \frac{\beta^{(K)}D}{2F} \right)
\end{equation}
Notice that from Eq.~\eqref{eq:beta_bound}, $0 \leq (1 - 2\beta^{(0)}F) \leq (1 - 2\beta^{(K)}F) < 1$.

If
\begin{equation*}
\lVert\bm P [\bm x^{(K)} - \bm x_0] \rVert_2 ^2  \leq \frac{\beta^{(K)}D}{2F}
\end{equation*}
Then
\begin{equation}
\lVert\bm P [\bm x^{(K+1)} - \bm x_0] \rVert_2 ^2  \leq \frac{\beta^{(K)}D}{2F} \leq \frac{\beta^{(0)}D}{2F} \leq X
\label{eq:p_bound_case1}
\end{equation}
where the last inequality is given by Eq.~\eqref{eq:beta_bound}.

Otherwise
\begin{equation}
\lVert\bm P [\bm x^{(K+1)} - \bm x_0] \rVert_2 ^2  \leq \lVert\bm P [\bm x^{(K)} - \bm x_0] \rVert_2 ^2 \leq X
\label{eq:p_bound_case2}
\end{equation}

$\bullet$ Proving $|c(\bm z^{(K+1)})| \leq C$: 

Since we have established $|c(\bm x^{(K+1)})| \leq C$, Eq.~\ref{eq:c_converge_restore} holds for $k=K+1$. Therefore
\begin{equation}
|c(\bm z^{(K+1)})| = A|c(\bm x^{(K+1)})| \leq |c(\bm x^{(K+1)})| \leq C
\end{equation}

$\bullet$ Proving $\lVert\bm P [\bm z^{(K+1)} - \bm x_0] \rVert_2 ^2 \leq X$: 

Since we have established $\lVert\bm P [\bm x^{(K+1)} - \bm x_0] \rVert_2 ^2 \leq X$, Eq.~\eqref{eq:p_converge_restore} holds for $k=K+1$.

From Eqs.~\eqref{eq:p_bound_case1} and \eqref{eq:p_bound_case2}, we can establish, through recursion, that
\begin{equation}
\lVert\bm P [\bm x^{(k)} - \bm x_0] \rVert_2 ^2 \leq \max\left\{\frac{\beta^{(0)}D}{2F},   \lVert\bm P [\bm x^{(0)} - \bm x_0] \rVert_2 ^2\right\}, \forall k \leq K+1
\end{equation}
Therefore,
\begin{equation}
\begin{aligned}
\lVert\bm P [\bm z^{(K+1)} - \bm x_0] \rVert_2 ^2 &\leq r_1^{(K+1)2}\lVert\bm P [\bm x^{(K+1)} - \bm x_0] \rVert_2 ^2 + \beta^{(K+1)2} E \\
& \leq r_1^{(K+1)2}\max\left\{\frac{\beta^{(0)}D}{2F},   \lVert\bm P [\bm x^{(0)} - \bm x_0] \rVert_2 ^2\right\} + \beta^{(K+1)2} E \\
& \leq r_1^{(0)2} \max\left\{\frac{\beta^{(0)}D}{2F},   \lVert\bm P [\bm x^{(0)} - \bm x_0] \rVert_2 ^2\right\} + \beta^{(0)2} E \\
& \leq X
\end{aligned}
\end{equation}
The first line is given by Eq.~\eqref{eq:p_converge_restore}. The last line is given by Eq.~\eqref{eq:beta_bound}.
\end{proof}

\begin{lemma}
Under the assumptions in Thm.~\ref{thm:converge}
\begin{equation}
\bm P [\bm x^* - \bm x_0] = 0
\label{eq:optimality_cond}
\end{equation}
\label{lem:project_optimality}
\end{lemma}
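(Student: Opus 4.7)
The claim is a first-order optimality condition at $\bm x^*$. The plan is to argue by contradiction: supposing $\bm P[\bm x^* - \bm x_0] \neq \bm 0$, I would construct a feasible perturbation of $\bm x^*$ whose image under $d(\cdot - \bm x_0)$ is strictly smaller, contradicting the local optimality assumed of $\bm x^*$.

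First I would use the identity $\bm P \bm s(\bm x^*) = \bm 0$, a one-line consequence of the definition of $\bm P$, to specialize assumption~\ref{ass:convexity} at $\bm x = \bm x^*$. The $b^{(k)} \bm s(\bm x^*)$ term on the left-hand side is annihilated on contraction with $\bm P^T \bm P (\bm x^* - \bm x_0)$, leaving (WLOG $a^{(k)} > 0$)
\begin{equation*}
\nabla^T d(\bm x^* - \bm x_0) \bm P^T \bm P (\bm x^* - \bm x_0) \geq (\gamma / a^{(k)}) \lVert \bm P (\bm x^* - \bm x_0)\rVert_2^2 > 0.
\end{equation*}
Hence $\bm w := -\bm P^T \bm P (\bm x^* - \bm x_0)$ is a strict first-order descent direction for $d(\cdot - \bm x_0)$ at $\bm x^*$, and is nonzero because $\mathrm{range}(\bm P) \cap \mathrm{null}(\bm P^T) = \{\bm 0\}$ (the former lies in $\nabla c(\bm x^*)^\perp$, the latter is its span).

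Next I would set $\bm x^*(\epsilon) := \bm x^* + \epsilon \bm w - \eta(\epsilon) \bm s(\bm x^*)$, solving for $\eta(\epsilon)$ via the implicit function theorem so that $c(\bm x^*(\epsilon)) \leq 0$ for all small $\epsilon > 0$. This is applicable because $\nabla^T c(\bm x^*) \bm s(\bm x^*) \geq m/\sqrt{n} > 0$ by Lem.~\ref{lem:bounded_inner_product}, and a first-order computation gives $\eta(\epsilon) = \epsilon (\bm P \nabla c(\bm x^*))^T \bm P(\bm x^* - \bm x_0) / (\nabla^T c(\bm x^*) \bm s(\bm x^*)) + O(\epsilon^2)$. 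Taylor-expanding $d(\bm x^*(\epsilon) - \bm x_0)$ about $\epsilon = 0$, I would then verify that the leading $O(\epsilon)$ contribution is strictly negative, yielding the desired contradiction with local optimality of $\bm x^*$.

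The main obstacle is this final verification: since $\eta(\epsilon) = \Theta(\epsilon)$ in general, the correction term $-\eta \nabla^T d(\bm x^* - \bm x_0) \bm s(\bm x^*)$ contributes at the same order as the descent $\epsilon \nabla^T d(\bm x^* - \bm x_0) \bm w$, and the two must be carefully combined. The key algebraic leverage is that $\bm w \in \mathrm{range}(\bm P^T)$, so $\bm s^T(\bm x^*) \bm w = 0$, and the cross terms rearrange into expressions involving $\bm P \nabla c(\bm x^*)$ and $\bm P(\bm x^* - \bm x_0)$ that can be bounded below by the strict positive lower bound from assumption~\ref{ass:convexity}. For the $\ell_2$ case this simplifies dramatically since $\bm P \nabla c(\bm x^*) = \bm 0$, making $\eta \equiv 0$ a valid first-order choice and the descent immediate; the $\ell_\infty$ case requires the full bookkeeping outlined above.
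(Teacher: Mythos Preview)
Your route differs from the paper's. The paper does not use assumption~\ref{ass:convexity} here at all: it invokes Thm.~\ref{thm:min_norm} to write down the minimizer of the \emph{linearized} problem $\min d(\bm x-\bm x_0)$ subject to $\nabla^T c(\bm x^*)(\bm x-\bm x^*)=0$, checks that any such minimizer satisfies $\bm P[\cdot-\bm x_0]=0$, and then argues that first-order optimality forces $\bm x^*$ to solve this linearized problem (with a separate subgradient/uniqueness argument, using assumptions~\ref{ass:lipschitz} and~\ref{ass:unique}, for the $\ell_\infty$ case where the linearized minimizer is non-unique).

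Your $\ell_2$ reduction is fine, but the $\ell_\infty$ ``bookkeeping'' does not close. Write $\bm g=\nabla d(\bm x^*-\bm x_0)$, $\bm n=\nabla c(\bm x^*)$, $\bm s=\bm s(\bm x^*)$, $\bm p=\bm P(\bm x^*-\bm x_0)$. After correcting $\bm w=-\bm P^T\bm p$ by a multiple of $\bm s$ so that the direction lies in $\bm n^\perp$, the resulting feasible direction is exactly $\bm v=\bm P\bm w=-\bm P\bm P^T\bm p$ (use $\bm P\bm s=\bm 0$ and that $\bm P$ restricts to the identity on $\bm n^\perp$), and the leading change in $d$ is $-\bm g^T\bm P\bm P^T\bm p$. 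Assumption~\ref{ass:convexity}, however, only controls $\bm g^T\bm P^T\bm P(\bm x^*-\bm x_0)=\bm g^T\bm P^T\bm p$, i.e.\ the form $\bm P^T\bm P$, not $\bm P\bm P^T$; for the oblique projector arising in the $\ell_\infty$ case these differ, and assumption~\ref{ass:convexity} gives no handle on the latter. More fundamentally, local optimality of $\bm x^*$ already forces $\bm g^T\bm v\ge 0$ for every first-order feasible $\bm v\in\bm n^\perp$, so a strictly negative $O(\epsilon)$ contribution along a feasible curve is unavailable in principle; the contradiction has to come from somewhere other than a first-order feasible descent direction. A secondary issue: the implicit function theorem needs $c\in C^1$, which assumption~\ref{ass:difference} does not grant (it allows jump discontinuities of $\nabla c$); any feasibility correction here would have to be built from assumption~\ref{ass:bounded_diff} directly rather than IFT.
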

\begin{proof}
From Thm.~\ref{thm:min_norm}, a solution, denoted as $\bm x'$, to
\begin{equation}
\begin{aligned}
&\min_{\bm x} d(\bm x - \bm x_0) \\
\mbox{s.t.} &\nabla^T c(\bm x^*) (\bm x - \bm x^*) = 0
\end{aligned}
\label{eq:opimality_problem}
\end{equation}
would satisfy 
\begin{equation}
\bm P[\bm x' - \bm x_0] = 0
\label{eq:alternative_solution}
\end{equation}

If $\bm P[\bm x^* - \bm x_0] \neq 0$, there are two possibilities. The first possibility is that $\bm x^*$ is not a solution to Eq.~\eqref{eq:opimality_problem}, which contradicts with the first order optimality condition that $\bm x^*$ must satisfy.

The second possibility is there are multiple solutions to the problem in Eq.~\eqref{eq:opimality_problem}, and $\bm x'$ and $\bm x^*$ are both its solutions. This can happen if $d(\cdot)$ is $\ell_1$ or $\ell_\infty$ norm. By definition
\begin{equation}
\nabla^T c(\bm x^*) (\bm x' - \bm x^*) = 0
\end{equation}
Since $\bm x^*$ is a local minimum to Eq.~\eqref{eq:attack_problem}, $\exists j \in \mathcal{I}, \varepsilon < 1$, $\forall \delta < \varepsilon$, $\bm x_\delta = \delta \bm x^* + (1 - \delta) \bm x'$, s.t. 
\begin{equation}
c(\bm x_\delta) > 0, ~ \bm x_\delta \in \mathbb{B}_j
\label{eq:unfeasible}
\end{equation}
Otherwise, if $c(\bm x_\delta) \leq 0$, then $\bm x_\delta$ is a feasible solution to the problem in Eq.~\eqref{eq:attack_problem} and
\begin{equation}
d(\bm x_\delta - \bm x_0) \leq \delta d(\bm x^* - \bm x_0) + (1 - \delta) d(\bm x' - \bm x_0) = d(\bm x^* - \bm x_0)
\end{equation}
which contradicts with the assumption that $\bm x^*$ is a unique local optimum in $\mathbb{B}$.

Eq.~\eqref{eq:unfeasible} implies
\begin{equation}
\nabla^T_j c(\bm x_\delta) (\bm x_\delta - \bm x^*) =(1 - \delta) \nabla^T_j c(\bm x_\delta) (\bm x' - \bm x^*)> 0
\label{eq:contradict1}
\end{equation}
On the other hand, notice that Eq.~\eqref{eq:alternative_solution} implies $\bm s(\bm x^*) = \lambda (\bm x_0 - \bm x')$, $\lambda > 0$. For $\ell_1$/$\ell_\infty$ cases, $\bm s_j(\bm x)$ takes discrete values. Therefore, to satisfy assumption \ref{ass:lipschitz}, $\bm s_j(\bm x_\delta) = \bm s_j(\bm x^*)$, which implies
\begin{equation}
0 = d(\bm x^*) - d(\bm x^*) \geq - \left[\lambda_j \nabla^T c_j (\bm x_\delta) + \sum_{i \in I, i\neq j} \lambda_i \nabla^T c_i (\bm x^*) \right] (\bm x^* - \bm x'), \lambda_i > 0, i \in \mathcal{I}
\label{eq:contradict2}
\end{equation}
The first inequality is because $-\left[\lambda_j \nabla^T c_j (\bm x_\delta) + \sum_{i \in I, i\neq j} \lambda_i \nabla^T c_i (\bm x^*) \right] \in \nabla_s^T d(\bm x' - \bm x_0)$.

Eqs.~\eqref{eq:contradict1} and \eqref{eq:contradict2} cause a contradiction.
\end{proof}

Now we are ready to prove Thm.~\ref{thm:converge}.
\begin{proof}[Proof of Thm.~\ref{thm:converge}]
From Lems.~\ref{lem:c_converge}, \ref{lem:p_converge} and \ref{lem:bounded_set}, we can established that Eqs.~\eqref{eq:c_converge} and \eqref{eq:p_converge} holds under all the assumptions in Thm.~\ref{thm:converge}. The only thing we need to prove is that Eqs.~\eqref{eq:c_converge} and \eqref{eq:p_converge} necessarily implies $\lim_{k\rightarrow} \lVert \bm x^{(k)} - \bm x_0 \rVert_2 = 0$.

First, from Lem.~\ref{lem:project_optimality}
\begin{equation}
\lVert \bm P [\bm x^* - \bm x_0] \rVert_2 = 0
\end{equation}

Then, $\forall \bm x' \in \mathbb{B}$ s.t. $\lVert \bm P [\bm x' - \bm x_0]\rVert_2 ^2 = 0$, we have $\bm x' - \bm x = \lambda \bm s(\bm x')$. From assumption~\ref{ass:bounded_diff}, we know that $c(\bm x')$ is monotonic along $\bm x' - \bm x = \lambda \bm s(\bm x')$. Therefore, $\bm x^*$ is the only point in $\mathbb{B}$ that satisfies $\lVert \bm P [\bm x' - \bm x_0]\rVert_2 ^2 = 0$ and $c(\bm x') = 0$.

Also, notice that $\bm P [\bm x - \bm x_0]$ and $c(\bm x)$ are both continuous mappings. This concludes the proof.
\end{proof}

\begin{theorem}
The solution to
\begin{equation}
\begin{aligned}
&\argmin_{\bm x} d(\bm x) \\
\mbox{s.t.}& ~ \nabla^T c(\bm x') \bm x = b
\end{aligned}
\label{eq:min_norm}
\end{equation}
is
\begin{equation}
\bm x = \frac{b \bm s(\bm x')}{\nabla^T c(\bm x') \bm s(\bm x')}
\end{equation}
\label{thm:min_norm}
\end{theorem}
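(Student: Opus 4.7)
The plan is to derive the minimizer via the standard dual-norm duality, i.e., the generalized Hölder inequality $|\bm y^T \bm x| \le d^*(\bm y)\, d(\bm x)$ that holds for every norm and its dual. Under the normalization convention adopted in the paper (explicitly visible in Eq.~\eqref{eq:s} for both $\ell_2$ and $\ell_\infty$), $\bm s(\bm x')$ is a vector of unit $d$-norm that attains the supremum in the definition of $d^*$, so $\nabla^T c(\bm x')\,\bm s(\bm x') = d^*(\nabla c(\bm x'))$ and $d(\bm s(\bm x')) = 1$.

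First, I would apply the Hölder-like inequality with $\bm y = \nabla c(\bm x')$ to any feasible $\bm x$. Using the constraint $\nabla^T c(\bm x')\bm x = b$, this yields
\begin{equation*}
|b| \;=\; |\nabla^T c(\bm x')\bm x| \;\le\; d^*(\nabla c(\bm x'))\,d(\bm x) \;=\; \bigl(\nabla^T c(\bm x')\bm s(\bm x')\bigr)\,d(\bm x),
\end{equation*}
which gives the lower bound $d(\bm x) \ge |b|/\bigl(\nabla^T c(\bm x')\bm s(\bm x')\bigr)$ for every $\bm x$ satisfying the constraint.

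Next, I would verify that the candidate $\bm x^{\star} = b\,\bm s(\bm x')/\bigl(\nabla^T c(\bm x')\bm s(\bm x')\bigr)$ both satisfies the equality constraint (direct substitution gives $\nabla^T c(\bm x')\bm x^{\star} = b$) and attains the lower bound: by positive homogeneity of the norm and $d(\bm s(\bm x'))=1$,
\begin{equation*}
d(\bm x^{\star}) \;=\; \frac{|b|}{\nabla^T c(\bm x')\bm s(\bm x')}\,d(\bm s(\bm x')) \;=\; \frac{|b|}{d^*(\nabla c(\bm x'))}.
\end{equation*}
Matching the lower bound from the previous step, this establishes optimality.

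The only subtle point is justifying $d(\bm s(\bm x'))=1$ from the wording ``$\bm s(\bm x)$ is defined such that $\nabla^T c(\bm x)\bm s(\bm x) = d^*(\nabla^T c(\bm x))$''; this is immediate from the standard fact that the dual-norm supremum $d^*(\bm y) = \sup_{d(\bm v)\le 1} \bm y^T \bm v$ is attained on the unit $d$-sphere, and is directly visible in the explicit $\ell_2/\ell_\infty$ forms in Eq.~\eqref{eq:s}. Uniqueness is not needed—the theorem only asserts that the displayed vector \emph{is} a minimizer, which suffices for the use of this lemma inside the derivation of Eq.~\eqref{eq:resotre_solution}.
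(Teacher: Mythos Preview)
Your proof is correct. Both your argument and the paper's rest on the same dual-norm fact, but the presentations differ in a way worth noting. The paper proceeds constructively: it writes $\bm x = \lambda \bm y$ with $d(\bm y)=1$, observes that the constraint forces $\lambda\,\nabla^T c(\bm x')\bm y = b$, so minimizing $\lambda$ amounts to maximizing $\nabla^T c(\bm x')\bm y$ over the unit $d$-sphere, which by definition of the dual norm is attained at $\bm y = \bm s(\bm x')$; $\lambda$ then drops out of the constraint. Your route is instead a lower-bound-plus-attainment argument: apply the generalized H\"older inequality $|\nabla^T c(\bm x')\bm x| \le d^*(\nabla c(\bm x'))\,d(\bm x)$ to get $d(\bm x) \ge |b|/d^*(\nabla c(\bm x'))$ for every feasible $\bm x$, then check that the candidate saturates the bound. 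Your version is slightly cleaner in that it handles the sign of $b$ transparently via the absolute value and does not require introducing the auxiliary decomposition; the paper's version is marginally more constructive in that it ``discovers'' $\bm s(\bm x')$ rather than positing it. Either way the content is the dual-norm equality case, and your justification that $d(\bm s(\bm x'))=1$ follows from the supremum in the dual-norm definition being attained on the unit sphere is exactly what is needed.
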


\begin{proof}

Decompose $\bm x = \lambda \bm y$, where $d(\bm y) = 1$. Then Eq.~\eqref{eq:min_norm} can be rewritten as
\begin{equation}
    \begin{aligned}
    &\argmin_{\lambda, \bm y: d(\bm y) = 1} \lambda  \\
    \mbox{s.t.}& ~ \lambda \nabla^T c(\bm x') \bm y = b
    \end{aligned}
    \label{eq:min_norm1}
\end{equation}
Notice that the product of $\lambda$ and $\nabla^T c(\bm x') \bm y$ is constant, so if $\lambda$ is to be minimized, then $\nabla^T c(\bm x') \bm y$ needs to be maximized. Namely, $\bm y$ can be determined by solving
\begin{equation}
    \max_{\bm y: d(\bm y) = 1} \nabla^T c(\bm x') \bm y 
\end{equation}
which is the definition of dual norm. Therefore 
\begin{equation}
    \bm y = \bm s(\bm x')
    \label{eq:y_sol}
\end{equation}
Plug Eq.~\eqref{eq:y_sol} into the constraint in Eq.~\eqref{eq:min_norm1}, we can solve for $\lambda$. This concludes the proof.

\end{proof}

As a remark, Thm.~\ref{thm:min_norm} is applicable to the optimization problems in Eqs.~\eqref{eq:restore} and~\eqref{eq:opimality_problem} by changing the variable $\tilde{\bm x} = \bm x - \bm x_0$ and redefining $b$ accordingly.

\begin{theorem}
For $\ell_2$ norm, if all the assumptions in Thm.~\ref{thm:converge}, but assumption~\ref{ass:convexity}, hold, and
\begin{equation}
    \exists m_a > 0, \mbox{ s.t. } a^{(k)} \geq m_a
    \label{eq:a_lower_bound}
\end{equation}
also assuming $c(\bm x)$ is convex in $\mathbb{B}$, then $\exists \mathbb{B}' = \{\bm x: \lVert \bm P[\bm x - \bm x^*] \rVert_2 ^2 \leq X', |c(\bm x)| \leq C'\} \subset \mathbb{B}$, s.t. assumption~\ref{ass:convexity} hold.
\label{thm:convexity}
\end{theorem}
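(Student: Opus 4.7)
The plan is to establish assumption~\ref{ass:convexity} by expanding both sides around $\bm x^*$ and exploiting the $\ell_2$ structure together with convexity of $c$ and Lipschitz continuity of $\bm s$. The starting point is the special form of $\bm P$ for $\ell_2$: since $\bm s(\bm x^*) = \nabla c(\bm x^*)/\lVert \nabla c(\bm x^*)\rVert_2$ is a unit vector, $\bm P = \bm I - \bm s(\bm x^*)\bm s(\bm x^*)^T$ is an orthogonal projection, so $\bm P^T\bm P = \bm P$. Together with Lem.~\ref{lem:project_optimality} (which gives $\bm P(\bm x^* - \bm x_0) = 0$), this allows me to work with the single quantity $\bm v := \bm P(\bm x - \bm x_0) = \bm P(\bm x - \bm x^*)$ on both sides of the inequality.

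First I would handle the distance term. Using $\nabla d(\bm x - \bm x_0) = (\bm x - \bm x_0)/\lVert \bm x - \bm x_0\rVert_2$ and $\bm P(\bm x - \bm x_0) = \bm v$, a direct computation gives
\begin{equation*}
a^{(k)}\nabla d(\bm x - \bm x_0)^T\bm P(\bm x - \bm x_0) \;=\; \frac{a^{(k)}\lVert \bm v\rVert_2^2}{\lVert \bm x - \bm x_0\rVert_2} \;\geq\; \frac{m_a \lVert \bm v\rVert_2^2}{\lVert \bm x^* - \bm x_0\rVert_2 + \sqrt{X'}},
\end{equation*}
which is bounded below by a positive constant times $\lVert \bm v\rVert_2^2$ once $\mathbb{B}'$ is small enough that $\lVert \bm x - \bm x_0\rVert_2$ stays close to $\lVert \bm x^* - \bm x_0\rVert_2$.

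Next I would control the $\bm s$ term. Because $\bm P\bm s(\bm x^*) = 0$, we have $b^{(k)}\bm s(\bm x)^T\bm P(\bm x - \bm x_0) = b^{(k)}\bigl(\bm P[\bm s(\bm x) - \bm s(\bm x^*)]\bigr)^T\bm v$, which Lipschitz continuity (assumption~\ref{ass:lipschitz}) bounds in absolute value by $|b^{(k)}|L_s\lVert \bm x - \bm x^*\rVert_2\lVert \bm v\rVert_2$. To keep $\lVert \bm x - \bm x^*\rVert_2$ tied to $\lVert \bm v\rVert_2$, I would decompose $\bm x - \bm x^* = \pi\bm s(\bm x^*) + \bm v$ and use convexity of $c$ twice: the supporting hyperplane at $\bm x^*$ gives $\lVert \nabla c(\bm x^*)\rVert_2\,\pi \leq c(\bm x)$, while the supporting hyperplane at $\bm x$ combined with the near-collinearity $\bm s(\bm x)^T\bm s(\bm x^*) \geq 1 - \tfrac12 L_s^2\lVert \bm x - \bm x^*\rVert_2^2$ yields a lower bound on $\pi$ in terms of $c(\bm x)$ and $\lVert \bm v\rVert_2$. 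Consequently $|\pi| \leq O(C'/m) + O(\lVert \bm v\rVert_2)$, so that $\lVert \bm x - \bm x^*\rVert_2 \leq O(C'/m) + O(\lVert \bm v\rVert_2)$ throughout $\mathbb{B}'$.

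The hard part is that the Cauchy--Schwarz bound on the $b^{(k)}$ term produces a linear-in-$\lVert \bm v\rVert_2$ contribution of order $M_b L_s C'\lVert \bm v\rVert_2 / m$, which \emph{a priori} dominates the quadratic lower bound from the $a^{(k)}$ term when $\lVert \bm v\rVert_2$ is very small. The way out is to use the monotonicity consequence of convexity, $(\nabla c(\bm x) - \nabla c(\bm x^*))^T(\bm x - \bm x^*) \geq 0$, projected onto the tangent plane to extract the sharper relation $(\bm s(\bm x) - \bm s(\bm x^*))^T\bm v = O(\lVert \bm v\rVert_2^2)$ uniformly in $\pi$, rather than the loose $O(\lVert \bm x - \bm x^*\rVert_2\lVert \bm v\rVert_2)$. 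Once that quadratic bound is in hand, the inequality reads $(m_a/\lVert \bm x^* - \bm x_0\rVert_2 - M_b L_s\,o(1))\lVert \bm v\rVert_2^2 \geq \gamma\lVert \bm v\rVert_2^2$, which is satisfied by choosing $X', C'$ small and $\gamma \in (0, m_a/\lVert \bm x^* - \bm x_0\rVert_2)$ strictly smaller than the leading coefficient. The case $\lVert \bm v\rVert_2 = 0$ is trivial since both sides vanish, which is why restricting to $\mathbb{B}' \subset \mathbb{B}$ suffices.
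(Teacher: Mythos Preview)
Your preliminary reductions agree with the paper: in the $\ell_2$ case $\bm P=\bm I-\bm s(\bm x^*)\bm s(\bm x^*)^T$ is an orthogonal projector so $\bm P^T\bm P=\bm P$, Lem.~\ref{lem:project_optimality} gives $\bm P(\bm x-\bm x_0)=\bm P(\bm x-\bm x^*)=:\bm v$, and your computation of the $a^{(k)}\nabla d$ contribution is exactly the paper's Eq.~\eqref{eq:conv5}. The gap is in your ``hard part.'' Gradient monotonicity of a convex function is a \emph{one-sided} inequality; after decomposing $\bm x-\bm x^*=\pi\,\bm s(\bm x^*)+\bm v$ it only yields $\nabla c(\bm x)^T\bm v\ge-\pi\bigl(\nabla c(\bm x)-\nabla c(\bm x^*)\bigr)^T\bm s(\bm x^*)$, and it does not produce the two-sided estimate $(\bm s(\bm x)-\bm s(\bm x^*))^T\bm v=O(\lVert\bm v\rVert_2^2)$ uniformly in $\pi$ that you claim. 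A concrete obstruction: for the convex function $c(\bm x)=x_1+(x_1+x_2)^2$ with $\bm x^*=0$ and $\bm s(\bm x^*)=(1,0)$, one has $\bm s(\bm x)^T\bm v\approx 2(\pi+v)v$ near the origin, which carries a term of order $|\pi|\,\lVert\bm v\rVert_2$---precisely the linear contribution you were trying to eliminate. So your proposed sharpening does not close the gap you correctly identified.

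The paper takes a different, two-step route that sidesteps the need for any such quantitative bound. It first restricts to the tangent hyperplane $\{\bm x:\nabla^Tc(\bm x^*)(\bm x-\bm x^*)=0\}$; there $\bm P(\bm x-\bm x^*)=\bm x-\bm x^*$, so the convexity inequality $(\nabla c(\bm x)-\nabla c(\bm x^*))^T(\bm x-\bm x^*)\ge0$ becomes $\nabla^Tc(\bm x)\,\bm P^T\bm P(\bm x-\bm x_0)\ge0$ (Eq.~\eqref{eq:conv4}), which handles the $b^{(k)}\bm s(\bm x)$ term by \emph{sign} rather than by magnitude. Combined with the $a^{(k)}$ computation this gives assumption~\ref{ass:convexity} with strict inequality on the hyperplane, for $\gamma=m_a/(2\lVert\bm x^*-\bm x_0\rVert_2)$. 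The paper then invokes continuity of the three scalar functions $\nabla^Tc(\bm x)\bm P^T\bm P(\bm x-\bm x^*)$, $\nabla^Td(\bm x-\bm x_0)\bm P^T\bm P(\bm x-\bm x_0)$ and $\gamma(\bm x-\bm x_0)^T\bm P^T\bm P(\bm x-\bm x_0)$ to extend the strict inequality from the hyperplane to a full neighborhood $\mathbb{B}'$. The idea you are missing is this reduction to the tangent hyperplane, where convexity delivers nonnegativity of the $\bm s$-term outright and no $O(\lVert\bm v\rVert_2^2)$ estimate is needed.
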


\begin{proof}
Since $c(\bm x)$ is convex in $\mathbb{B}$, we have $\forall \bm x$
\begin{equation}
(\nabla^T c(\bm x) - \nabla^Tc(\bm x^*)) (\bm x - \bm x^*) \geq 0
\label{eq:conv0}
\end{equation}
Further, assume $\bm x$ satisfies
\begin{equation}
    \nabla^Tc(\bm x^*) (\bm x - \bm x^*) = 0
    \label{eq:conv1}
\end{equation}
Then we have
\begin{equation}
   \bm P^T \bm P (\bm x - \bm x^*) = \bm P (\bm x - \bm x^*) = \bm x - \bm x^*
   \label{eq:conv2}
\end{equation}
where the first equality is from the fact that $\bm P$ is an orthogonal projection matrix under $\ell_2$ norm; the second equality is from the fact that the projection subspace of $\bm P$ is orthogonal to $\nabla c(\bm x^*)$ by construction.

Also, from Lem.~\ref{lem:project_optimality}, we have
\begin{equation}
    \bm P (\bm x - \bm x^*) = \bm P (\bm x - \bm x_0)
    \label{eq:conv3}
\end{equation}

Plug Eqs.~\eqref{eq:conv1} to~\eqref{eq:conv3} into \eqref{eq:conv0}, we have
\begin{equation}
    \nabla^T c(\bm x) \bm P^T \bm P (\bm x - \bm x^*) \geq 0
    \label{eq:conv4}
\end{equation}

On the other hand, let $\gamma = m_a / 2\lVert \bm x^* - \bm x_0 \rVert_2$, then $\forall x$ satisfying Eq.~\eqref{eq:conv1} and $\bm x \neq \bm x^*$
\begin{equation}
\begin{aligned}
a^{(k)}\nabla^T d(\bm x - \bm x_0) \bm P^T \bm P (\bm x - \bm x_0) &= \frac{a^{(k)}}{\lVert \bm x - \bm x_0 \rVert_2} (\bm x - \bm x_0)^T \bm P^T \bm P (\bm x - \bm x_0) \\
&\geq \frac{m_a}{\lVert \bm x^* - \bm x_0 \rVert_2}(\bm x - \bm x_0)^T \bm P^T \bm P (\bm x - \bm x_0) \\
& > \gamma (\bm x - \bm x_0)^T \bm P^T \bm P (\bm x - \bm x_0) 
\end{aligned}
\label{eq:conv5}
\end{equation}
where the second line comes from Eq.~\eqref{eq:a_lower_bound} and the fact that $\bm x^*$ is the optimal solution to the problem in Eq.~\eqref{eq:opimality_problem}. Combining Eqs.~\eqref{eq:conv4} and~\eqref{eq:conv5}, we know that assumption~\ref{ass:convexity} holds \textit{with strict inequality} for $\bm x$ satisfying Eq.~\eqref{eq:conv1} and $\bm x \neq \bm x^*$.

$\nabla^T c(\bm x) \bm P^T \bm P (\bm x - \bm x^*)$, $\nabla^T d(\bm x - \bm x_0) \bm P^T \bm P (\bm x - \bm x_0)$ and $\gamma (\bm x - \bm x_0)^T \bm P^T \bm P (\bm x - \bm x_0) $ are continuous functions, and therefore $\exists \mathbb{B}'$ where assumption~\ref{ass:convexity} also holds. This concludes the proof.

\end{proof}

\subsection{Hyperparameter Settings for \algnamens}

Table~\ref{tab:margin_setting} list the hyperparameter settings for \algnamens.

\begin{table}[!t]
\caption{Hyperparameter settings for \algnamens.}
    \centering
    \small
    \begin{tabular}{l|ccc|ccc}
    \hline
    Hyper- & \multicolumn{3}{c|}{$\ell_2$} & \multicolumn{3}{c}{$\ell_\infty$} \\
    parameters  & \textsc{Mnist}    & \textsc{Cifar}    & \textsc{ImageNet} & \textsc{Mnist}    & \textsc{Cifar}0   & \textsc{ImageNet} \\
    \cline{2-7}
    $a^{(k)}$   & Eq.~\eqref{eq:project_s1} & Eq.~\eqref{eq:project_s1} & Eq.~\eqref{eq:project_s1} & 0.1   & 1 & 0.3 \\
    $b^{(k)}$   & 1 & 1 & 1 & Eq.~\eqref{eq:project_s2} & Eq.~\eqref{eq:project_s2} & Eq.~\eqref{eq:project_s2} \\
    $\alpha$    & 1   &1   & 1 & 0.2   & 0.2   & 0.2 \\
    $\beta^{(k)}$   & $(k+1)^{-0.5}$    & $(k+1)^{-0.5}$    & $(k+1)^{-0.5}$    & $(k+1)^{-1}$  & $(k+1)^{-1}$  & $(k+1)^{-1}$ \\
    $u$ &   0.05    & 0.1   & 0 &0.05    & 0.1   & 0\\
     \cline{2-7}
    $\varepsilon$ & \multicolumn{6}{c}{-0.01} \\
    \hline
    \end{tabular}
    \label{tab:margin_setting}
\end{table}

\end{document}